\definecolor{mgray}{gray}{.9}
\definecolor{dgray}{gray}{.5}
\definecolor{myblue}{rgb}{0.21,0.49,0.74}
\theoremstyle{definition}
\newtheorem{theorem}{Theorem}[section]
\newtheorem{definition}{Definition}[section]
\newtheorem{corollary}{Corollary}[theorem]
\theoremstyle{remark}
\newtheorem{remark}{Remark}
\crefname{section}{Sec.}{Secs.}
\Crefname{section}{Section}{Sections}
\Crefname{table}{Table}{Tables}
\crefname{table}{Tab.}{Tabs.}
\definecolor{Gray}{gray}{0.9}
\definecolor{LightCobaltBlue}{RGB}{143,170,220}
\definecolor{Amber}{RGB}{255,102,0}
\definecolor{BlackOlive}{RGB}{59,56,56}
\definecolor{AlloyOrange}{RGB}{197,90,17}
\definecolor{B'dazzledBlue}{RGB}{47,85,151}
\definecolor{DarkBlue}{RGB}{72, 116, 203}
\definecolor{DarkGreen}{RGB}{106, 169, 63}
\newcommand{\improve}[1]{\textcolor{green}}
\newcommand{\algname}{BeTop\xspace}
\newcommand{\fullalgname}{Behavioral Topology\xspace}
\newcommand{\modelname}{BeTopNet\xspace}
\DeclareMathOperator*{\argmax}{argmax}
\DeclareMathOperator*{\argmin}{argmin}
\newcommand\blankfootnote[1]{%
  \let\thefootnote\relax\footnotetext{#1}%
  \let\thefootnote\svthefootnote%
}
\title{Reasoning Multi-Agent Behavioral Topology 
for Interactive Autonomous Driving}
\author{
  Haochen Liu$^{1,2}$\quad Li Chen$^{2,3}$\quad Yu Qiao$^{2}$\quad  Chen Lv$^{1\dagger}$\quad Hongyang Li$^{2,3\dagger}$ \\
  $^{1}$ Nanyang Technological University\quad $^{2}$ Shanghai AI Lab\quad $^{3}$ University of Hong Kong\\
}
\begin{document}

\maketitle
\blankfootnote{Work done while Haochen's internship at Shanghai AI Lab. $^\dagger$ Equal co-advising.}

\begin{abstract}
  Autonomous driving system aims for safe and social-consistent driving through the behavioral integration among interactive agents. However, challenges remain due to multi-agent scene uncertainty and heterogeneous interaction. Current dense and sparse behavioral representations struggle with inefficiency and inconsistency in multi-agent modeling, 
  leading to instability 
  of collective behavioral patterns when integrating prediction and planning (IPP). To address this, we initiate a topological formation that serves as a compliant behavioral foreground to guide downstream trajectory generations. Specifically, we introduce \textbf{\fullalgname (\algname)}, a pivotal topological formulation that explicitly represents the consensual behavioral pattern among multi-agent future. \algname is derived from braid theory to distill compliant interactive topology from multi-agent future trajectories. A synergistic learning framework (\modelname) supervised by \algname facilitates the consistency of behavior prediction and planning within the predicted topology priors. Through imitative contingency learning, \algname also effectively manages behavioral uncertainty for prediction and planning. Extensive verification on large-scale real-world datasets, including nuPlan and WOMD, demonstrates that \algname achieves state-of-the-art performance in both prediction and planning tasks. Further validations on the proposed interactive scenario benchmark showcase planning compliance in interactive cases. Code and model is available at \url{https://github.com/OpenDriveLab/BeTop}. 
\end{abstract}

\section{Introduction}
\label{sec:intro}
\vspace{-5pt}

Autonomous driving system aspires to safe, humanoid, and socially compatible maneuvers~\cite{chen2022milestones}. This drives for formulation, prediction, and negotiation of collective future behaviors among interactive agents and autonomous vehicles (AVs)~\cite {wang2022social}. Remarkable accuracy is achieved by learning-based paradigms~\cite{chen2023end}, including end-to-end modular design~\cite{casas2021mp3,hu2022st, liu2024hybrid, hu2023planning}, social modeling~\cite{huang2023gameformer,yuan2021agentformer}, and trajectory-level integration~\cite{chen2023tree,cui2021lookout,pini2023safe, wu2022trajectory}. However, substantial challenges arise in real-world cases due to scene uncertainty and volatile interactive patterns for multi-agent future behaviors.  

To embrace compliant patterns for multi-agent future behaviors, current formulations fall into two mainstreams, dense representation and sparse representation (\cref{fig:formulation}). Dense representation quantizes agent behaviors under ego-centric rasterization, forecasting bird's eye view (BEV) occupancy probabilities \cite{hu2021fiery, hu2023planning, hu2023imitation} or temporal flow \cite{mahjourian2022occupancy, liu2023multi, agro2023implicit}. It is easy to deduce interactions, perform scalable behaviors for agents \cite{kim2022stopnet}, and align with BEV perceptions \cite{li2023delving}. Still, dense representation is hindered by frozen receptions. It causes safety-vulnerable intractability and occlusions potentially interacting with ego maneuvers~\cite{mahjourian2022occupancy, liu2023occupancy}. Contrary to pixel-wise behavioral probability, sparse representation forecasts agent-anchored set of trajectories \cite{shi2022motion, pmlr-v205-jia23a, jia2023hdgt, shi2024mtr++} or intention distributions \cite{chen2023tree, huang2024learning, huang2023dtpp}. Its multi-modal formulation for each agent marks the elasticity in diverse behavioral uncertainty and tractability under flexile spatial semantics. However, behavioral misalignment~\cite{huang2023conditional} and modality collapse~\cite{jia2023hdgt} impede compliant multi-agent modeling, requiring exponential computations with growth agent numbers~\cite{ngiam2021scene}. 
Issues particularly result in unstable and slow behavioral learning when exposed to predictions and planning (IPP)~\cite{hagedorn2023rethinking}. Typical solutions by conditional prediction~\cite{sun2022m2i,huang2023conditional,jia2024amp} or game-theoretic reasoning~\cite{huang2023gameformer,espinoza2022deep} often lead to nonstrategic maneuvers~\cite{zhan2016non} due to non-compliant rollouts in adjusting interactive behaviors.
This calls for a re-formulation for multi-agent behaviors, which should stabilize collective behavioral patterns in a compliant manner for IPP objectives.

\begin{figure}
    \centering
    \includegraphics[width=\textwidth]{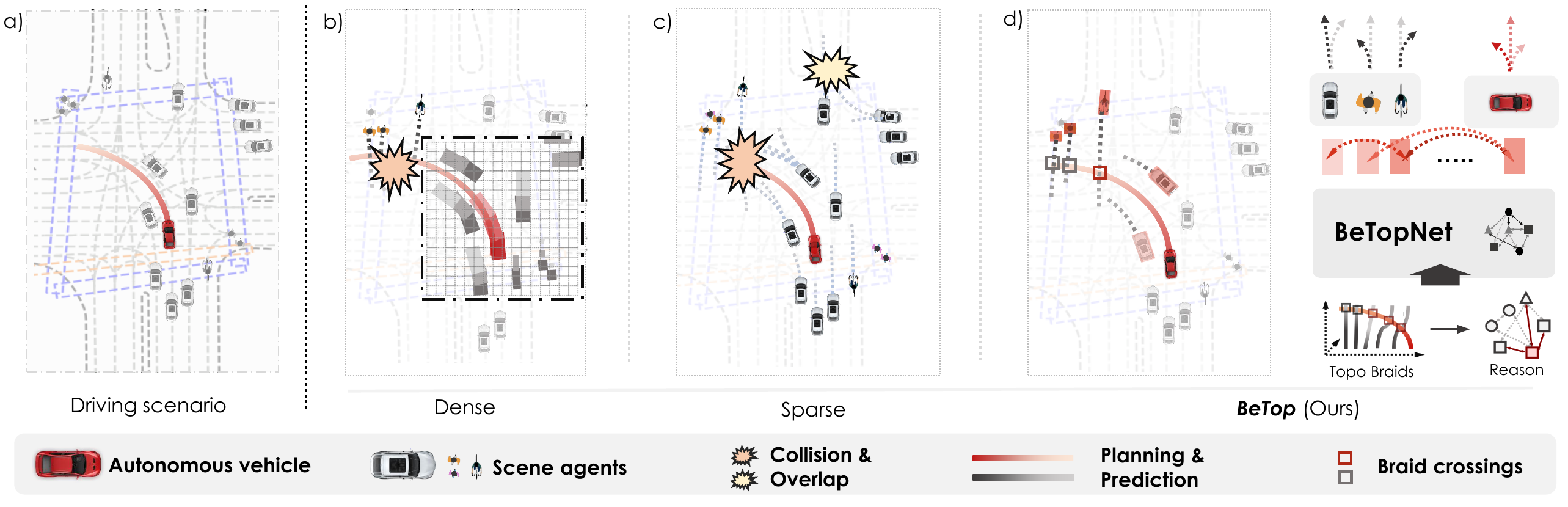}
    \vspace{-16pt}
     \caption{\textbf{Multi-agent Behavioral Formulation. (a)} A typical driving scenario in Arizona, US \cite{Ettinger_2021_ICCV}; \textbf{(b)} Dense representation conducts scalable occupancy prediction jointly, but restrained reception leads to unbounded collisions with planning; \textbf{(c)} Sparse supervision derives multi-agent trajectories with multi-modalities, while it struggles with conflicts among integrated prediction and planning; \textbf{(d)} \algname reasons future topological behaviors for all scene-agents through braids theory, funneling interactive eventual agents (in highlighted colors) and guiding compliant joint prediction and planning.}
    \label{fig:formulation}
    \vspace{-12pt}
\end{figure}

The decision-making process for human drivers provides valuable insights. Humans primarily determine the future behavior of interacted agents for decision-making without relying on their specific states~\cite{wang2022social,xie2017learning}. Thus, an effective strategy involves assessing agent-wise behavioral impact on planning maneuvers, and reasoning about compliant interactions. Our fundamental insight is that compliant multi-agent behaviors exhibit topological formations, which can be identified by distilling consensual interactions from future behaviors. Prior works have approached this challenge through structural design~\cite{mo2022multi, huang2022multi} or implicit relational learning \cite{biktairov2020prank, park2023leveraging,li2020evolvegraph} using GNNs~\cite{kipf2017semisupervised} or Transformers~\cite{vaswani2017attention}. Other studies quantify uncertainty by topological properties \cite{mavrogiannis2022analyzing, mavrogiannis2023abstracting}. Nevertheless, current literature is scarce in formulating explicit future supervision of compliant multi-agent behavioral patterns.

To this end, we launch the multi-agent behavior formulation termed as \textbf{\emph{\fullalgname} (\algname)}. At its core, \algname explicitly forms the topological supervision of consensual multi-agent future interactions, and reasons to guide prediction and planning. \algname stems from braid theory~\cite{artin1947theory}, which infers compliant interactions of multiple paths from the intertwining of their braids. This empowers \algname to intuitively distill forward intertwines (occupancy) as joint topology from braided multi-agent future trajectories (\cref{fig:formulation}), marrying dense and sparse representation. With the aid of \algname, we introduce a synergistic Transformer-based learning stack, \modelname, for learning IPP objectives. 
To implement, an iterative decoding strategy simultaneously reasons about \fullalgname and generates trajectory sets. 
Then the topology-guided local attention, embedded in each decoder layer, %
selectively queries behavioral semantics from social-compliant agents within the predicted \algname priors. To further alleviate multi-agent uncertainty through topological guidance, a contingency planning paradigm is fitfully deployed. We lay out the imitative contingency learning process, which regulates the safety-ensured short-term plan. It maintains the long-range uncertainty by reasoned joint predictions from \algname. 
Experimental results exhibit enhanced consistency and accuracy for prediction and planning in real-world scenarios. Testing in proposed interactive cases further highlights the planning ability of \modelname. To sum up, our contributions are three-fold: 
\begin{itemize} [leftmargin=*,itemsep=0pt,topsep=0pt]
\item We bring in the concept of \fullalgname, a multi-agent behavioral formulation for topological reasoning that explicitly supervises consensual future interactions jointly for the IPP system. 
\item A synergistic learning framework \modelname, offering joint planning and prediction guided by topology reasoning, is devised. Topology-guided local attention and imitative contingency planning could resolve scene compliance and multi-agent uncertainty.  
\item Benchmarking on nuPlan~\cite{karnchanachari2024towards} and WOMD~\cite{Ettinger_2021_ICCV}, our approach demonstrates strong performance in both planning strategy and prediction accuracy. \modelname witnesses 
evident improvement over previous counterparts, \textit{e.g.},  $+7.9\%$ in general planning score, $+3.8\%$ under interactive cases, $+4.1\%$ mAP for joint prediction, and $+2.3\%$ mAP for marginal prediction. 

\end{itemize}

\section{Related work}
\label{sec:related_work}
\vspace{-5pt}

\noindent\textbf{Multi-agent behavioral modeling.}
Carving the collective future behavior of diverse agents is imperative for socially-consistent driving maneuver. Earlier approaches centered around occupancy prediction \cite{hu2021fiery, hu2022st, casas2021mp3, bansal2018chauffeurnet}. Forecasting the spatial presence under dense BEV representation~\cite{li2022bevformer, li2023delving} offers flexibility of arbitrary agents \cite{liu2023multi, kamenev2022predictionnet, kim2022stopnet} and alignment with perception \cite{hu2023planning, yang2023visual}. However, rigidity in resolution induces scenario occlusion \cite{mahjourian2022occupancy}, rendering intractable occupancy~\cite{liu2023occupancy}. In parallel, sparse representation consolidates multi-agent behavior into cohesive modalities across future trajectories~\cite{gao2020vectornet, shi2022motion, lin2024eda, tang2019multiple} or intentions~\cite{huang2023dtpp,chen2023tree, huang2024learning, qi2018intent}. Joint future behaviors are derived through goal-based sampling or recombination from marginal predictions~\cite{gilles2021thomas,gilles2022gohome,gu2021densetnt}. However, the collection of joint modalities is susceptible to mode collapse and entails exponential complexity~\cite{ngiam2021scene,varadarajan2022multipath++}. Meanwhile, topological representation has garnered traction as motion primitives~\cite{roh2021multimodal,mavrogiannis2022winding} or tools~\cite{mavrogiannis2022analyzing, mavrogiannis2023abstracting} for scenario quantification, yet topological properties delineating collective future behaviors remain largely unexplored. Our \algname targets the issue, marrying dense behavior probabilities by topology with the sparse motion from joint predictions to present structured future behaviors.

On the other hand, inconsistent communal agent behaviors have motivated leveraging future interactions. Implicit approaches obtain tacit interactions with attention~\cite{nayakanti2023wayformer,jia2024amp,zhou2022hivt} or GNN~\cite{jia2023hdgt,cui2023gorela,mo2022multi,salzmann2020trajectron++} from final motion regressions. Nonetheless, the implicit supervisions are found inefficient in dynamic scenarios~\cite{shi2022motion}. Contrarily, explicitly reasoning mutual behaviors by conditional factorization~\cite{sun2022m2i,rowe2023fjmp}, relation reasoning~\cite{li2020evolvegraph,park2023leveraging}, or entropy-based methods~\cite{luo2023jfp,casas2020implicit} offer consistent behavioral priors. However, hefty variance across agent dynamics and scenario geometries yield unstable inference. Distinguished from them, \algname crafts a compact topological supervision that stabilize future interactions among multi-agent behaviors. Derived from topological braids, \algname offers a topological equivalent behavioral representation to guide compliant forecasting and planning.

\smallskip
\noindent\textbf{Integrated prediction and planning.} IPP system aims to harmonize trajectory-based learning of future interactive behaviors between the ego vehicle and social agents. Rule-based approaches~\cite{dauner2023parting,treiber2000congested,hang2020human} integrate handcrafted future interactions to evaluate candidate planning profiles, offering remarkable outcomes in rule-powered reactive simulation~\cite{karnchanachari2024towards}. Still, the absence of real-world behaviors exhibits significant gaps in interactive scenarios. Learning-based methods yield imitative planning by integrating predictions within holistic modeling~\cite{PlanT,cheng2023rethinking,scheel2022urban}. However, history-based coalitions pose challenges in supervising future homology among agents. Recently, hybrid pipelines \cite{huang2023gameformer,karkus2023diffstack} have utilized post-processing and optimization upon learning-based models to realize behavior interactions among predictions and planning. This can entail significant computational overhead, and imitative planning tends to overestimate hereditary uncertainty in behavior predictions. Tree-based~\cite{chen2023tree,huang2023dtpp} and contingency-enabled~\cite{li2023marc,cui2021lookout} works seek to balance planning preemption and aggression in the face of behavior uncertainty. Nonetheless, pipelines without holistic interactions fall into passive planning maneuvers and incur high exponential cost for predictions. In our work, \algname provides an explicit prior for future interactive behaviors which enhances compliant trajectory generations. Moreover, the synergistic prediction and contingency planning networks with \algname effectively manage behavioral uncertainty.

\section{\fullalgname}
\label{sec:method}
\vspace{-5pt}

Presenting \algname, we commence the \fullalgname formulation and task statement of IPP for autonomous driving in \cref{sec:method_formualtion}. Then, we demonstrate the \modelname network architecture (\cref{fig:framework}) for topological reasoning and IPP generation in \cref{sec:method_framework}. Finally, in \cref{sec:method_contingency_plan}, we propose the imitative contingency learning process by topological guidance for the proposed network. 

\begin{table}[t]
\begin{minipage}[t]{0.49\textwidth}
    \centering
    \captionof{figure}{ \textbf{\algname formulation.} Joint future trajectories are transformed to braid sets, and then form joint topology through intertwine indicators.}
    \label{fig:betop}
    \includegraphics[width=0.88\textwidth]{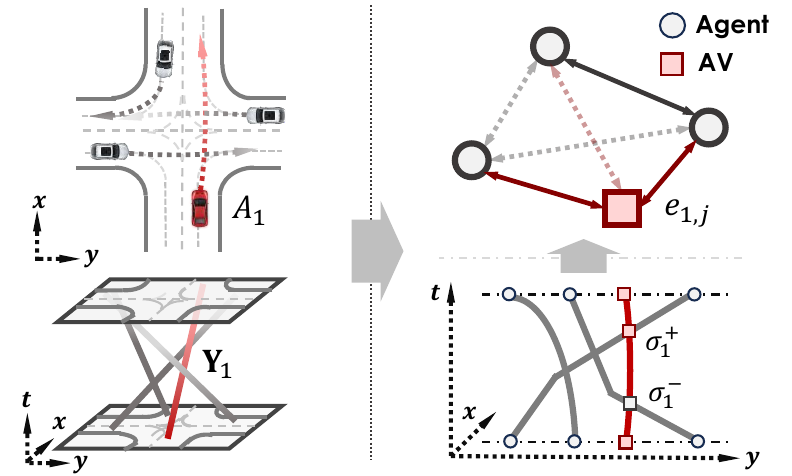}
\end{minipage}\hspace{.4cm}%
\begin{minipage}[t]{0.48\textwidth}
\centering
\captionof{table}{\textbf{Analysis on different behavioral formulations.} 
\algname labels behave most similarly to human annotations~\cite{Ettinger_2021_ICCV}, excelling over other formulations like $k$ nearest GT or local attention.
}
\label{tab:labels}
\vspace{8pt}
\scalebox{0.8}{
\begin{tabular}{l|cc}
\toprule
 Behavioral  & \multicolumn{2}{c}{WOMD} \\
 Formulations   & Acc. $\uparrow$ & AUC $\uparrow$ \\ \midrule
 \textcolor{dgray}{\emph{Expert}}~\cite{Ettinger_2021_ICCV} & \textcolor{dgray}{1.000} & \textcolor{dgray}{1.000} \\
 \midrule
  GT top-$k$ & 0.833 & \underline{0.702} \\
  Local attention~\cite{shi2022motion} & 0.951 & 0.522 \\
  JFP graph~\cite{luo2023jfp} & \underline{0.955} & 0.500 \\\midrule
 \rowcolor[gray]{0.9}\textbf{\algname (Ours)}  & \textbf{0.967} & \textbf{0.731} \\
\bottomrule
\end{tabular}
\vspace{.1cm}
}
\end{minipage}
\vspace{-12pt}
\end{table}

\subsection{Formulation}
\label{sec:method_formualtion}
\vspace{-4pt}

\noindent\textbf{Problem formulation.} We consider the driving scenario with $N_a$ agents as $A_{1:N_a}$ at presence $t=0$, along with the scenario map $\mathbf{M}$. The states over historical horizon $T_h$ are denoted as $\mathbf{X}_1$ for AV and as $\mathbf{X}_{2:N_a}$ for scenario agents, respectively, where $\mathbf{X}_n=\{\mathbf{x}^{-T_h:0}\}_n$, $n\in[1,N_a]$. 
The objective for integrated prediction and planning is to jointly predict scene agents' trajectories $\mathbf{Y}_{2:N_a}$ as well as AV planning $\mathbf{Y}_{1}$ over a future horizon $T_f$ as $\mathbf{Y}_n=\{\mathbf{y}^{1:T_f}\}_n$, $n\in[1,N_a]$. 

\smallskip
\noindent\textbf{Topological formulation.} We leverage the braid theory~\cite{artin1947theory}, which probes explicit formulations for compliant multi-agent interactions from future data $\mathbf{Y}_{1:N_a}$. Intuitively, it denotes a transform process for $\mathbf{Y}_{1:N_a}$ with respective agent coordinates, and then gathers each future forward intertwine (occupancy) as joint interactions. Formally, consider the braid group $\mathbf{B}_{N_a}=\{\sigma_n\}$ by $N_a$ primitive braids $\sigma_n$, each of which $\sigma_n=(f^n_1, \cdots, f^n_{N_a})$ denotes a tuple of monotonically increased functions $f: \mathbb{R}^3\times\mathbf{Y}\rightarrow\mathbb{R}^2\times I$
mapping from Cartesian $\left(\vec{x},\vec{y},\vec{t} \right)$ to lateral coordinate $\left(\vec{y},\vec{t} \right)$ for agent future $\mathbf{Y}$. Specifically, the function $f^n_i$ in $\sigma_{n}$ is defined as $f^n_i\rightarrow(\mathbf{Y}_i-\mathbf{b}_n)\mathbf{R}_n; 1\leq i,n \leq N_a$, where $\mathbf{b}_n$ and $\mathbf{R}_n$ denote the left-hand transform matrix to local coordinate of agent $A_n$. The joint interactive behaviors are identified as a set of braids having intertwines $\{\sigma^\pm_n\}\subset\mathbf{B}_{N_a}$ over others~\cite{ mavrogiannis2023abstracting}, as shown in \cref{fig:betop}. Opposite to implicit methods \cite{shi2022motion,luo2023jfp, li2020evolvegraph} banking on future distance heuristic, each intertwine in the braid can signify an explicit behavioral response, distinguishing between assertive ($\sigma^+_n$, elicit yielding from others) and passive ($\sigma^-_n$, yield to others) maneuvers. To avert difficulties in dynamic braid set inference, we redraft multi-agent braids from a topology reasoning perspective. 

Named by \algname, the goal is to reason a topological graph $\mathcal{G}=\left(\mathcal{V},\mathcal{E}\right)$ for multi-agent future behaviors (\cref{fig:betop}). Expressly, node topology $\mathcal{V}=\{\mathbf{Y}_n\}$ is denoted by multi-agent future trajectories. We can then reformulate the braid set $\{\sigma^\pm_n\}$ as an edge topology $e_{ij}\rightarrow\mathcal{E}\in\mathbb{R}^{N_a\times N_a}; 1\leq i,j \leq N_a$ for future interactive behaviors. Each topology element $e_{ij}$ can be defined by two braid functions $f^i_i, f^i_j\in\sigma_i$ assessing the future interwines along with $\mathbf{Y}_i,\mathbf{Y}_j$ as:
$e_{ij} = \max_{t}\mathbf{I}\left(f^i_i(\mathbf{y}^t_i),f^i_j(\mathbf{y}^t_j)\right)$. Here $\mathbf{I}$ is an intertwine indicator by segment intersection \cite{antonio1992faster} under lateral coordinates. With favorable properties proved in~\cref{supp:property}, we can formulate the reasoning task as: 
\begin{equation}
    \label{equ:betop_obj}
    \mathcal{G}^* = (\max\hat{\mathcal{V}},\max\hat{\mathcal{E}}).
\end{equation}
Agent future $\hat{\mathbf{Y}}$ in node term $\hat{\mathcal{V}}$ is defined by Gaussian mixtures (GMM) and optimized in~\cref{sec:method_contingency_plan}. The edge topology reasoning $\hat{\mathcal{E}}$ can be specified as a probabilistic inference problem by:
\begin{equation}
    \label{equ:topo_obj}
    \max\hat{\mathcal{E}} = \max\sum_i\sum_je_{ij}\log\hat{e_{ij}} + (1-e_{ij})\log(1-\hat{e_{ij}}),
\end{equation}
where $1\leq i,j \leq N_a$. Synergistic reasoning structures are then established optimizing $\mathcal{G}^*$.

\smallskip
\noindent\textbf{Comparative analysis.} To highlight \algname's position among various formulations, we first conduct a preliminary analysis to assess behavioral similarity by retrieving future interactive agent pairs using human annotations ~\cite{Ettinger_2021_ICCV}. Human likeness is quantified by classification metrics, including accuracy and the area under the curve (AUC), with annotated interactive IDs. As depicted in~\Cref{tab:labels}, labeled \algname achieves the closest behavioral similarity compared with other well-accepted formulations in the community. Compared with retrieving $k$ nearest strategy ($k=6$) by ground-truth future states, we observe advanced differentiation in non-interactive behaviors ($+16.1\%$ Acc., $+4.13\%$ AUC) by \algname. We then look into the generic learning-based structure by attention \cite{shi2022motion} or dynamic graph \cite{luo2023jfp} for interactive behaviors. Despite high accuracy, their inferior AUC scores imply difficulties in retrieving precise interactivity compared with \algname ($+19.9$ AUC). We refer analytical content in \cref{supp:property}.  This draft for a reasoning framework \algname prompting joint behaviors.

\begin{figure*}
    \centering
    \includegraphics[width=\textwidth]{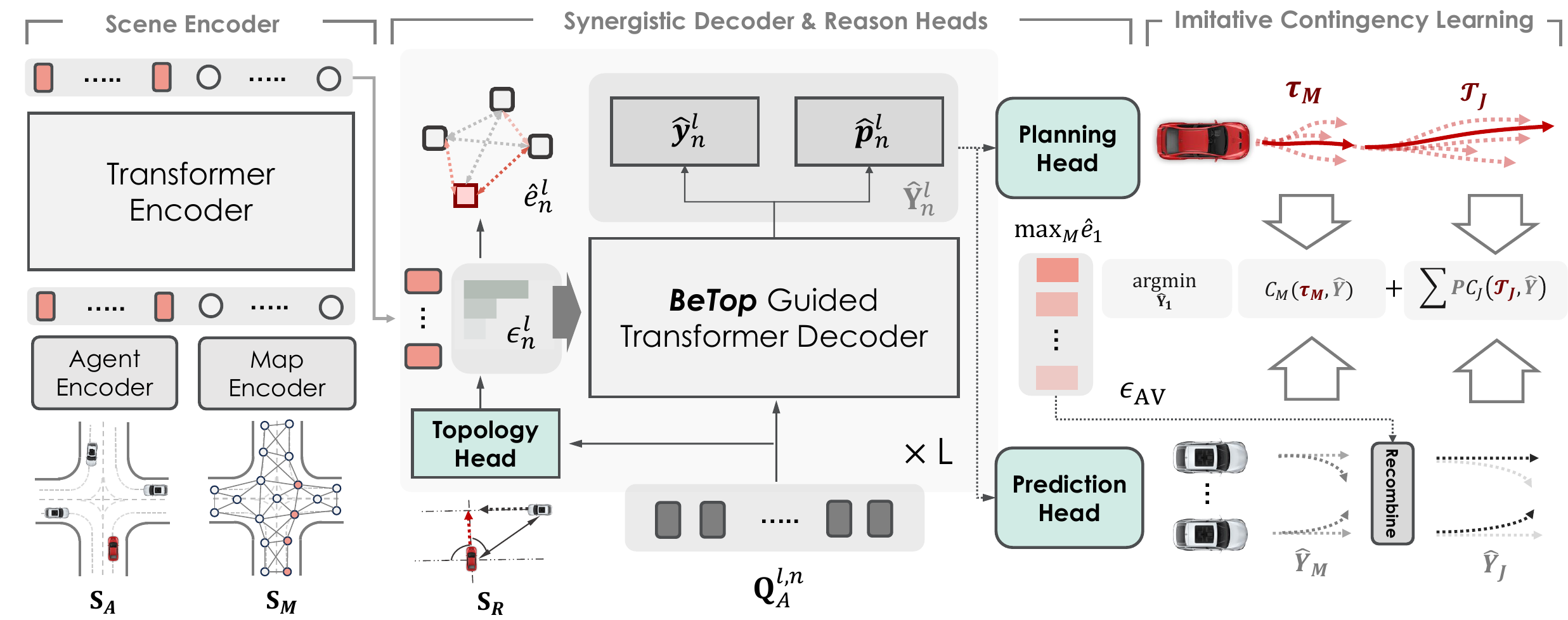}
    \vspace{-12pt}
    \caption{\textbf{The \modelname Architecture.}  \algname establishes an integrated network for topological behavior reasoning, comprising three fundamentals. Scene encoder generates scene-aware attributes for agent $\mathbf{S}_A$ and map $\mathbf{S}_M$. Initialized by $\mathbf{S}_R$ and $\mathbf{Q}_A$, synergistic decoder reasons edge topology $\hat{e}^l_n$ and trajectories $\hat{\mathbf{Y}}^l_n$ iteratively from topology-guided local attention. Branched planning $\tau\in\hat{\mathbf{Y}}_1$ with predictions and topology are optimized jointly by imitative contingency learning.}
    \label{fig:framework}
    \vspace{-10pt}
\end{figure*}

\subsection{\modelname}
\label{sec:method_framework}
\vspace{-4pt}

As presented in \cref{fig:framework}, we introduce the synergistic learning framework reasoning \algname in response to the series of challenges. It encompasses a Transformer backended encoder-decoder network. With encoded scene semantics $\mathbf{X};\mathbf{M}$, the proposed network features a synergistic decoder which reasons and guides \algname. Reason heads for topology $\hat{\mathcal{E}}$ and IPP for $\hat{\mathcal{V}}$ comprise the behavioral graph $\mathcal{G}$. 

\smallskip
\noindent\textbf{Scene encoder.} 
We leverage a scene-centric coordinate system following planning-oriented principle~\cite{hu2023planning}. Scene attributes comprise historical agent states $\mathbf{X}\in\mathbb{R}^{N_a\times\ T_h \times D_a}$ and map polyline inputs $\mathbf{M}\in\mathbb{R}^{N_m\times\ L_m \times D_m}$, where we portion $N_m$ map segments with length $L_m$ from full scene map. Both attributes are encoded separately as $\mathbf{S}_A\in\mathbb{R}^{N_a\times D}$ and $\mathbf{S}_M\in\mathbb{R}^{N_m\times D}$ and concatenated as scene features $\mathbf{S}=[\mathbf{S}_A;\mathbf{S}_M]\in\mathbb{R}^{(N_a+N_m)\times D}$. A stack of Transformer encoders with local attention are directly employed in capturing regional interactions from encoded scene semantics $\mathbf{S}_A,\mathbf{S}_M$.

\smallskip
\noindent\textbf{Synergistic decoder.} Retaining encoded scene features $\mathbf{S}_A,\mathbf{S}_M$, we zoom in the decoding strategy that asks for: 1) interactively reason simultaneous \algname formulations;  2) selectively decoding of compliant interactive semantics leveraging reasoned topology priors. To this end, we introduce the iterative process of $N$ Transformer decoder layers contributed to all agents, pursuing the basis from~\cite{liu2022dab}. To iron out the scene uncertainties, a multi-modal set of $M$ decoding queries $\mathbf{Q}^0_A\in\mathbb{R}^{M\times D}$ are initialized for multi-agent future trajectories. Meanwhile, relative attributes $\mathbf{S}_R\in\mathbb{R}^{N_a\times N_a\times D_R}$ are deployed through MLPs as topology features $\mathbf{Q}^0_R\in\mathbb{R}^{N_a\times N_a\times D}$ for edge topology reasoning. 

Next, we devise dual infostreams to the iterative decoding process for $\hat{\mathcal{V}}$ of future trajectories and $\hat{\mathcal{E}}$ of future topology. Given agent $A_n$, the decoding process in layer $l$ follows:
\begin{equation}
\label{equ:network}
        \mathbf{Q}^{l,n}_R = \texttt{TopoDecoder}\left(\mathbf{Q}^{l-1,n}_A, \mathbf{Q}^{l-1,n}_R, \mathbf{S}_A\right), \hat{e}^l_n=\texttt{TopoHead}\left(\mathbf{Q}^{l,n}_R\right),
\end{equation}
\begin{equation}
        \mathbf{Q}^{l,n}_A = \texttt{TransDecoder}\left(\mathbf{Q}^{l-1,n}_A, \mathbf{S}_A, \mathbf{S}_M, \hat{\mathbf{Y}}^{l-1}_n, \hat{e}^l_n \right),
        \hat{\mathbf{Y}}^l_n = \texttt{IPPHead}(\mathbf{Q}^{l,n}_A),
\end{equation}
where both future trajectories $\hat{\mathbf{Y}}_n\in\hat{\mathcal{V}}$ and interactive topology $\hat{e}_n\in\hat{\mathcal{E}}$ in \algname are decoded in synergistic manners. Reasoned edge topology $\hat{e}^l_n\in\mathbb{R}^{M\times N_a}$ are garnered by topological decoder with query broadcasting $\mathbf{Q}^{l-1,n}_A$; Reasoning nodes for $\hat{\mathbf{Y}}_n$, a Transformer decoder with topology-guided local attention are drafted  serving $\hat{e}^l_n$ as priors. We provide further details in~\cref{supp:model_structure}.

\smallskip
\noindent\textbf{Topology-guided local attention.}
Querying whole-scene agent semantics results in misaligned interactive agents and sparse attention. This motivates our design for local attention guided by the reasoned topology $\hat{e}^l_n\in\mathbb{R}^{M\times N_a}$ as priors. Specifically, we retrieve the top-$K$ index $\epsilon^l_n\in\mathbb{R}^{M\times K}$ priored from $\hat{e}^l_n$ for eventual interactive agents behaviors with $A_n$. Interactive indices are directly leveraged in gathering $\mathbf{S}_A$  selectively for local cross-attention. This process is formed as:
\begin{equation}
    \mathbf{C}^{l,n}_A = \texttt{TopoAttn}\left(\mathbf{Q}^{l-1,n}_A,\mathbf{S}_A,\hat{e}^l_n\right)\rightarrow
    \texttt{MultiHeadAttn}\left(q=\mathbf{Q}^{l-1,n}_A;k,v=\mathbf{S}^{i\in\epsilon^l_n}_A \right), 
\end{equation}
where $\epsilon^l_n = \argmax_K(\hat{e}^l_n)$. Topology-guided agent features $\mathbf{C}^{l,n}_A$ are then aggregated in each layer.

\smallskip
\noindent\textbf{Reason heads.} Given respective decoding features $\mathbf{Q}^{l,n}_R$ and $\mathbf{Q}^{l,n}_A$ for each layer, we affix reason heads accustomed to corresponding formulations for $\hat{e}_n$ and $\hat{\mathbf{Y}}_n$. Referred in~\cref{equ:network}, the topology head, planning head, and prediction head (IPP heads) are jointly devised by stacked MLPs in reasoning \algname results. For agent $A_n$ in each layer, reason heads decode GMM components of future states $\hat{\mathbf{y}}_n\in\mathbb{R}^{M\times T_f \times 5}$ (referring to $(\mu_x,\mu_y, \log\sigma_x,\log\sigma_y,\rho)$ per step) with mixture score $\hat{\mathbf{p}}_n\in\mathbb{R}^M$,$\{\hat{\mathbf{y}}_n,\hat{\mathbf{p}}_n\}\in\hat{\mathbf{Y}}_n$, as well as interactive edge topology $\hat{e}^l_n\in\mathbb{R}^{M\times N_a}$ for \algname.

\subsection{Imitative Contingency Learning }
\label{sec:method_contingency_plan}
\vspace{-4pt}
Pursuing the target in \cref{equ:betop_obj}, \modelname learns end-to-end objectives imitating human-like multi-agent behaviors, integrating compliant behaviors by contingency planning under scenario uncertainties.

\smallskip
\noindent\textbf{Imitation learning.} Imitation objectives are firstly established in regulating multi-agent behavioral states $\{\hat{\mathbf{Y}}_n\}\subset\hat{\mathcal{V}}$ while maximizing their interactive distributions $\hat{\mathcal{E}}$. The imitative objective for $\hat{\mathbf{Y}}$ is defined by the negative log-likelihood (NLL) from best-reasoned components $m^*$ closest to ground-truths, as denoted: $\mathcal{L}_{\mathcal{V}}=\sum^{T_f}_t\mathcal{L}_{\text{NLL}}(\hat{\mathbf{y}}^{m^*,t}_n, \hat{\mathbf{p}}^{m^*}_n, \mathbf{Y}_n)$. Followed~\cref{equ:topo_obj}, the behavioral distributions for edge topology are computed by binary cross-entropy (BCE) given gathered $\hat{e}^{m^*}_n\in\mathbb{R}^{N_a}$, formulated as $\mathcal{L}_{\mathcal{E}}=\sum^{N_a}_j\mathcal{H} (\hat{e}^{m^*}_{n,j},e_{n,j})$ over $N_a$ agents jointly. %

\smallskip
\noindent\textbf{Integrated contingency planning.} To integrate compliant behavior learning for $\mathcal{G}$ amidst multi-agent scenario uncertainties, contingency planning~\cite{hardy2013contingency,li2023marc} is turned out an apt solution.  Bridging immediate safe maneuvers $\tau_M$ to branched planning sets $\{\tau_J\}$ with joint prediction, it adjourns uncertain decisions and ensures actual safety. While direct joint prediction may lose diversity~\cite{cui2021lookout}, reasoned topology $\hat{\mathcal{E}}$ serves as a suitable medium distilling future interactive agents for efficient joint combination. Given imitative AV planning outputs $\tau\subset\hat{\mathbf{Y}}_1$ with branching time $t_b\in(1,T_f)$, integrating contingency learning asks for a safe short-term plan $\tau_M\in\mathcal{T}_M,\mathcal{T}_M\in\mathbb{R}^{M\times t_b \times 2}$ to full marginal predictions $\hat{Y}_M=\hat{\mathbf{Y}}_{2:N_a}$, as well as $M$ branched planning sets $\mathcal{T}^m_J=\{\tau^{1:M_b}_J\}_m$ guided by  joint predictions $\hat{Y}^m_J$. This is defined by:
 \begin{equation}
 \label{equ:conti}
     \tau^*_M = \argmin_{\tau\subset\hat{\mathbf{Y}}_1}\max_{\hat{Y}}C_M\left(\tau_M,\hat{Y}_M\right) + \sum_{m}P(\hat{Y}^m_J)C_J\left(\mathcal{T}^m_J,\hat{Y}^m_J\right),
 \end{equation}
 where $\max_{\hat{Y}}C_M$ denotes worst-case cost fir $\tau_M$; Joint predictions $\hat{Y}_J$ with scene probabilities $P(\hat{Y}_J)$ are recombined by $K_M$ interactive agent subsets, indexing $\epsilon_{\text{AV}}\in\mathbb{R}^{K_M}$ from sorted AV topology: $\epsilon_{\text{AV}}=\argmax_{K_M}(\max_M\hat{e}_1)$. It is described by joint costs $C_J$ in guiding branched planning maneuvers. Specifically, both cost functions are defined by the repulsive potential field~\cite{huang2023gameformer} discouraging planning proximity with respective prediction formulations.

\smallskip
\noindent\textbf{Training loss.} \modelname is trained end-to-end through imitative objectives and contingency planning costs by weighted integration for each layer, whenever applicable (for the datasets).  Please refer to~\cref{supp:cost_func} for additional details.

\section{Experiment}
\label{sec:exp}
\vspace{-5pt}

With preliminary analysis in~\cref{sec:method_formualtion}, this section further discovers the following questions: \textbf{1)} Can \algname perform compliant planning via \modelname, especially in interactive scenarios? \textbf{2)} Can \algname achieve accurate marginal and joint predictions of heterogeneous agents under diverse real-world cases? \textbf{3)} Can the formulated \algname facilitate existing state-of-the-art prediction and planning methods? and \textbf{4)} How do the functionalities in \modelname affect the performance?
\noindent\textbf{Benchmark and metrics.} 
\algname is verified on diverse benchmarks. We leverage two large-scale real-world datasets, \textit{i.e.}, nuPlan~\cite{karnchanachari2024towards} and Waymo Open Motion Dataset (WOMD)~\cite{Ettinger_2021_ICCV}, which are presently the most diverse motion datasets in manifesting planning and prediction performance. For planning tasks in nuPlan, there are in total 1M training cases with 8s horizons. 8,300 separated testing set are chosen by \emph{Test14-Hard} and \emph{Test14-Random} benchmarks~\cite{cheng2023rethinking} for hard-core and general driving scenes. With further demands verifying maneuvers under interactive cases, we build the \emph{Test14-Inter} benchmark filtering 1,340 scenes by testing set. Scenarios ranging 15 seconds are tested under three tasks: 1) open-loop (OL), 2) close-loop non-reactive (CL-NR) simulations, and 3) reactive (CL-R) ones by nuPlan simulator. We report the official Planning Scores~\cite{caesar2021nuplan} computed by each task.  The motion prediction tasks in WOMD share 487k training scenarios, with 44k validation and 44k testing set separately partitioned under two challenges: 1) The \emph{Marginal prediction challenge}~\cite{waymo_marginal} forecasting multiple scene agents independently;  2) The \emph{Joint prediction challenge}~\cite{waymo_interaction} predicting joint trajectory collections by two interactive agents. 
Primary metrics of mAP and Soft mAP are ranked for official leaderboards~\cite{waymo_marginal,waymo_interaction}. We leave experimental details in \Cref{supp:exp-detail}.

\begin{table*}[t]
\caption{\textbf{Performance comparison of open- and closed-loop planning on nuPlan benchmarks.} \modelname positions top average planning score and non-reactive simulation amongst SOTA planning systems by all types (rule, learning, and hybrid), especially under difficult benchmarked scenarios.}
\centering
\label{tab:results_nuplan}
\vspace{-4pt}
\scalebox{0.82}{
\begin{tabular}{l|l|ccc>{\columncolor[gray]{0.9}}c|ccc>{\columncolor[gray]{0.9}}c}
\toprule
\multirow{2}{*}{Type} & \multirow{2}{*}{Method} & \multicolumn{4}{c|}{\emph{Test14 Hard}} & \multicolumn{4}{c}{\emph{Test14 Random}}  \\
&       & OLS $\uparrow$    & CLS-NR $\uparrow$    & CLS $\uparrow$   & \textbf{Avg.} $\uparrow$     & OLS $\uparrow$     & CLS-NR $\uparrow$   & CLS $\uparrow$    & \textbf{Avg.} $\uparrow$     \\
\midrule
\textcolor{dgray}{\emph{Expert}}
                           &\textcolor{dgray}{\emph{Log Replay}}  & \textcolor{dgray}{1.000}   & \textcolor{dgray}{0.860}  & \textcolor{dgray}{0.688}  & \textcolor{dgray}{0.849}  & \textcolor{dgray}{1.000} & \textcolor{dgray}{0.940} & \textcolor{dgray}{0.759}  & \textcolor{dgray}{0.900} \\\midrule
\multirow{2}{*}{Rule}      & IDM~\cite{treiber2000congested}  & 0.201 & 0.562 &0.623 & 0.462 & 0.342      & 0.704  & 0.724  & 0.590   \\
                           & PDM-Closed~\cite{dauner2023parting} & 0.264  &  0.651 & 0.752  & 0.556  & 0.463  & 0.901 & 0.916 & 0.760\\
\midrule
\multirow{2}{*}{Hybrid}    & GameFormer~\cite{huang2023gameformer}  & 0.753  & 0.666  & 0.688  & 0.702  & 0.794  & 0.808  & 0.793  & 0.798  \\
                           & PDM-Hybrid~\cite{dauner2023parting}  & 0.738   & 0.660  & 0.758  & 0.719  & 0.822 & 0.902  & 0.916  & 0.880  \\
\midrule
\multirow{6}{*}{Learning}  & UrbanDriver~\cite{scheel2022urban}  & 0.769   & 0.515  & 0.491  & 0.592 & 0.824 & 0.633  & 0.610  & 0.689  \\
                           & PDM-Open~\cite{dauner2023parting}  & 0.791 & 0.335  & 0.358  & 0.495   & 0.841  & 0.528   & 0.572  & 0.647  \\   
                           & PlanCNN~\cite{PlanT}  & 0.524 & 0.494  & 0.522  & 0.513   & 0.629  & 0.697  & 0.675  & 0.667  \\
                           & GC-PGP~\cite{hallgarten2023prediction}   & 0.738 & 0.432  & 0.396  & 0.522   & 0.773  & 0.560  & 0.514  & 0.616  \\
                           & PlanTF~\cite{cheng2023rethinking}  & \underline{0.833}  & \underline{0.726}  & \underline{0.617}  & \underline{0.725} & \underline{0.871}   & \underline{0.865}   & \underline{0.806}   & \underline{0.847}  \\
                           & \textbf{\modelname (Ours)}  & \textbf{0.840} & \textbf{0.771} & \textbf{0.688}  & \textbf{0.766} &  \textbf{0.876} & \textbf{0.902} &\textbf{0.857} & \textbf{0.878} \\
\bottomrule

\end{tabular}
}
\vspace{-4pt}
\end{table*}

\begin{table*}[t]
\caption{\textbf{nuPlan closed-loop planning results on the proposed interactive benchmark.} \modelname achieves desirable PDMScore, with planning safety, road compliance, and driving progress.}
\centering
\label{tab:results_nuplan_inter}
\vspace{-4pt}
\scalebox{0.75}{
\begin{tabular}{l|l|cccccc>{\columncolor[gray]{0.9}}c}
\toprule
\multirow{2}{*}{Type} & \multirow{2}{*}{Method} & \multicolumn{7}{c}{\emph{Test14 Inter}}  \\
&       & Col. Avoid $\uparrow$    & Drivable $\uparrow$    & Direction $\uparrow$    & Progress $\uparrow$     & TTC $\uparrow$   & Comfort $\uparrow$    & \textbf{PDMScore} $\uparrow$     \\
\midrule
\textcolor{dgray}{\emph{Expert}}
                           &\textcolor{dgray}{\emph{Log Replay}}  & \textcolor{dgray}{1.000}   & \textcolor{dgray}{1.000}  & \textcolor{dgray}{1.000}  & \textcolor{dgray}{0.881} & \textcolor{dgray}{1.000} & \textcolor{dgray}{0.999}  & \textcolor{dgray}{0.950} \\\midrule
\multirow{1}{*}{Rule}      & PDM-Closed~\cite{dauner2023parting} & 0.886  & 1.000 & 1.000  & 0.818  & 0.853 & 0.999 & 0.833 \\
\midrule
\multirow{5}{*}{Learning}  & Constant Acc.  & 0.449   & 0.509  & 0.651  & 0.048 & 0.419 & \textbf{1.000}  & 0.108 \\
                           & UrbanDriver~\cite{scheel2022urban}  & 0.970 & \underline{0.955} & \underline{0.992}  & 0.798 & 0.932 & \textbf{1.000}  & 0.854 \\
                           & PlanCNN~\cite{PlanT}  & 0.902 & 0.895  & 0.973  & 0.678  & 0.859  & 0.999  & 0.720 \\
                           & PlanTF~\cite{cheng2023rethinking}  & \underline{0.982}  & 0.946  & 0.992 & \underline{0.825} & \textbf{0.952} & 0.999 & \underline{0.871} \\
                           & \textbf{\modelname (Ours)}  & \textbf{0.983} & \textbf{0.960} & \textbf{0.999}  &  \textbf{0.859} & \underline{0.950} &\underline{0.999} & \textbf{0.894} \\
\bottomrule
\end{tabular}
}
\vspace{-8pt}
\end{table*}

\begin{table*}[t]
\caption{\textbf{ Performance of marginal prediction on WOMD Motion Leaderboard. } \modelname surpasses existing motion predictors without model ensemble or using extra data. $^\dagger$ extra LIDAR data and pretrained model. {\setlength{\fboxsep}{0pt}\colorbox{gray!40}{Primary metric}}.}
\centering
\label{tab:results_womd_motion}
\vspace{-4pt}
\scalebox{0.85}{
\begin{tabular}{l|l|ccc|c>{\columncolor[gray]{0.9}}c}
\toprule
Set split & Method & minADE $\downarrow$ & minFDE $\downarrow$ & Miss Rate $\downarrow$ & mAP $\uparrow$  & \textbf{Soft mAP} $\uparrow$ \\ \midrule
\multirow{8}{*}{ \texttt{Test}} & ReCoAt~\cite{huang2022recoat} & 0.7703 & 1.6668  & 0.2437 & 0.2711 & - \\
                      & HDGT~\cite{jia2023hdgt}     & 0.5933 & 1.2055 & 0.1854& 0.3577 & 0.3709 \\
                      & MTR~\cite{shi2022motion}     & 0.6050 & 1.2207 & 0.1351 & 0.4129 & 0.4216 \\ 
                      & MTR++~\cite{shi2024mtr++}     & 0.5906 & 1.1939 & 0.1298  & 0.4329 & 0.4414 \\  
                      & MGTR$^\dagger$~\cite{gan2023mgtr}  & 0.5918 & 1.2135 & 0.1298 & 0.4505 & 0.4599 \\  
                      & EDA~\cite{lin2024eda}  & \textbf{0.5718} & \underline{1.1702}  & \textbf{0.1169}  & 
                      \underline{0.4487} & \underline{0.4596} \\ 
                      & ControlMTR~\cite{sun2024controlmtr}   & 0.5897 & 1.1916 & 0.1282  & 0.4414 & 0.4572 \\ 
                      & \textbf{\modelname (Ours)}     & \underline{0.5723} & \textbf{1.1668} & \underline{0.1176} & \textbf{0.4566} & \textbf{0.4678} \\ \midrule%
\multirow{3}{*}{\texttt{Val}} & MTR~\cite{shi2022motion}     & 0.6046 & 1.2251 & 0.1366  & 0.4129 &  - \\ 
                     & EDA~\cite{lin2024eda}     & \textbf{0.5708} & \underline{1.1730} & \underline{0.1178}  & \underline{0.4353} & - \\ 
                     & \textbf{\modelname (Ours)}     & \underline{0.5716} & \textbf{1.1640} & \textbf{0.1177} & \textbf{0.4416}  & -\\ 
\bottomrule

\end{tabular}
}
\vspace{-8pt}
\end{table*}

\subsection{Main Result}
\label{sec:main_results}
\vspace{-4pt}

\noindent\textbf{Performance for interactive planning.} \Cref{tab:results_nuplan} demonstrates the planning results under difficult and regular test cases. Notably, \modelname marks top average planning scores, achieving $+7.9\%$ in hard cases and excels $+6.2\%$ (CLS-NR) in closed-loop simulations. Specifically, it gains solid improvements 
against learning-based planners. This can be attributed to topological formulations learning stabilized joint behavioral patterns, boosting $+6.2\%,+4.3\%$ non-reactive simulations by real-world logs and enhancing reactive simulation ($+11.5\%,+6.3\%$). Contingency objectives enhance uncertainty compliance, leading to expanded results in hard 
scenarios. Meanwhile, \modelname also outperforms rule-based and hybrid planning agents asking for post-optimizations~\cite{huang2023gameformer} or hefty rules in coinciding with reactive simulation setups~\cite{dauner2023parting,treiber2000congested}. We report $+15.8\%$ and $+18.4\%$ results of non-reactive simulation in hard cases and close performance in general scenes. Moreover, interactive planning compliance is also verified in the proposed \emph{Test14-Inter} benchmark centering on interactive scenarios. As in \Cref{tab:results_nuplan_inter}, \modelname fosters $+3.8\%$ planning score over previous methods, marking $+5.5\%$ driving progress and $+2.9\%$  driving compliance closest to human performance. Qualitative results of interactive scenarios in \cref{fig:vis}(a-d) further corroborate planning compliance by \algname. 

\smallskip
\noindent\textbf{Performance for marginal and joint motion prediction.} Marginal prediction results are in \Cref{tab:results_womd_motion}. Without the aids of model ensembles or extra data~\cite{shi2024mtr++,varadarajan2022multipath++}, \modelname outperforms existing approaches, manifesting $+2.7\%$ and $+3.4\%$ mAP metrics comparing concurrent methods~\cite{lin2024eda,sun2024controlmtr} for compliant predictions. Exhibited strong prediction displacement metric ($-4.3\%$ minFDE) over methods using extra pretraining~\cite{gan2023mgtr}, it should be noted that displacement metric is less illustrative as it discounts uncertainty scoring. \modelname further outperforms $+6.0\%$ and $+26.1\%$ Soft mAP over multi-agent predictors solely leveraging scenario attention~\cite{shi2024mtr++} or graph~\cite{jia2023hdgt}. N. \Cref{tab:results_womd_interactive} exhibits the joint prediction results. \modelname outperforms all methods in both mAP metrics ($+4.1\%,+3.7\%$ Soft mAP and mAP), presenting robust prediction compliance credit to \algname formulations for stable future interaction patterns and aligned by local attention in \modelname. Particularly, \algname shows interactive compliance, improving $+5.1\%$ mAP over recent auto-regressive approaches~\cite{jia2024amp}, boosting $+25.4\%$ mAP with game-theoretic methods~\cite{huang2023gameformer} by a large margin. \cref{fig:vis} (e-h) demonstrates the qualitative prediction performance by \modelname. At the time of submission, \modelname ranked $1^{st}$ on both WOMD prediction leaderboards~\cite{waymo_interaction,waymo_marginal}.

\begin{table*}[t]
\caption{\textbf{Performance of joint prediction on WOMD Interaction Leaderboard.} \modelname outperforms in both mAP metrics. 
{\setlength{\fboxsep}{0pt}\colorbox{gray!40}{Primary metric}}.}
\centering
\label{tab:results_womd_interactive}
\vspace{-4pt}
\scalebox{0.85}{
\begin{tabular}{l|l|ccc|>{\columncolor[gray]{0.9}}cc}
\toprule
Set split & Method & minADE $\downarrow$ & minFDE $\downarrow$ & Miss Rate $\downarrow$ & \textbf{mAP} $\uparrow$ & Soft mAP $\uparrow$  \\ \midrule
\multirow{6}{*}{\texttt{Test}} & HeatIRm4~\cite{mo2022multi} & 1.4197 & 3.2595  & 0.7224 & 0.0804 & - \\
                      & M2I~\cite{sun2022m2i}    & 1.3506 & 2.8325 & 0.5538 &0.1239 & -  \\
                      & GameFormer~\cite{huang2023gameformer}  & 0.9721 & 2.2146 & 0.4933 & 0.1923 & 0.1982 \\
                      & AMP~\cite{jia2024amp}     & \underline{0.9073} & \underline{2.0415} & \underline{0.4212} & 0.2294 & 0.2365 \\ 
                      & MTR++~\cite{shi2024mtr++}     & \textbf{0.8795} & \textbf{1.9505} & \textbf{0.4143}  & \underline{0.2326} & \underline{0.2368}  \\  
                      & \textbf{\modelname (Ours)}  & 0.9744 & 2.2744 & 0.4355 & \textbf{0.2412} & \textbf{0.2466} \\ \midrule 
\multirow{3}{*}{\texttt{Val}} & MTR~\cite{shi2022motion}    & \underline{0.9132} & \underline{2.0536} & 0.4372 & 0.1992 &- \\ 
                     & AMP~\cite{jia2024amp}    & \textbf{0.8910} & \textbf{2.0133} & \underline{0.4172} & \underline{0.2344} &- \\ 
                     & \textbf{\modelname (Ours)}     & 0.9304 & 2.1340 & \textbf{0.4154} & \textbf{0.2366} &-  \\ 
\bottomrule

\end{tabular}
}
\vspace{-8pt}
\end{table*}
\noindent

\subsection{Ablation Study}
\label{sec:exp-ablation}
\vspace{-4pt}

Instructed by the last two motivating questions, we investigate the effect of \algname formulations and components inside \modelname. For efficient study, we randomly partition $20\%$ of WOMD train set for prediction, and directly report the planning results by \emph{Test14-Random} benchmark, which are both representative for the original datasets as verified by~\cite{shi2022motion,cheng2023rethinking}.
\noindent

\noindent\textbf{Synergy with existing state-of-the-art methods.} We first study the effect adjoining \algname as synergistic objectives over existing SOTA methods in planning and prediction. Described in \Cref{tab:results_nuplan_synergy} and \Cref{tab:results_womd_synergy}, \algname augments $+2.1\%$ and $2.0\%$ planning score with learning-based and rule-based planners, respectively. Similar compliance effects are also witnessed in guiding strong motion predictors, bringing $+1.1\%,+2.4\%$ improved mAP with $-1.7\%$ prediction errors of minADE.

\smallskip
\noindent\textbf{Number of interactive agents for topology-guided local attention.} In determining the number $K$ future interactive agents for \modelname in local attention, we validate the prediction mAP under an array of agent numbers. Shown in~\cref{fig:ablation_topk}, we observe a converging effect, with maximum $+3.7\%$ mAP by the growing number of interactive agents. A drop of $-1.8\%$ mAP is captured after the peak performance of $K=32$. It is due to falsely accepting non-interactive agent values by large $K$.

\smallskip
\noindent\textbf{Different functionalities in \modelname.} We further investigate the effects of different functionalities for \modelname in \Cref{tab:ablation_componts}. Compared to the full model, ablations in ID.1 and ID.2 underscore the imitative contingency learning process for costs ($-2.9\%$ CLS) and contingency branching ($-1\%$ CLS-NR). Sole imitative \modelname performs the best OLS (ID.2), while the stabilizing effects found in~\cref{sec:main_results} are verified ($-2.8\%$ CLS-NR) in comparing ID.3-ID.5 for joint interactive patterns.

\begin{table}[t]
\begin{minipage}[t]{0.49\textwidth}
\centering
\caption{\textbf{Results of integrating \algname by strong planning baselines in nuPlan benchmark.}}
\centering
\label{tab:results_nuplan_synergy}
\vspace{2pt}
\scalebox{0.68}{
\begin{tabular}{l|ccc>{\columncolor[gray]{0.9}}c}
\toprule
 \multirow{2}{*}{Method} & \multicolumn{4}{c}{nuPlan} \\
  & OLS $\uparrow$ & CLS-NR $\uparrow$ & CLS $\uparrow$ & \textbf{Avg.} $\uparrow$ \\ \midrule
 PDM~\cite{dauner2023parting} & 0.463 & 0.898  & \textbf{0.918} & 0.760  \\
  \textbf{PDM~\cite{dauner2023parting} +\algname}    & \textbf{0.488} & \textbf{0.916} & 0.902 & \textbf{0.770}\\ \midrule
  PlanTF~\cite{cheng2023rethinking}  & 0.871 & 0.864 & 0.805 & 0.847  \\
  \textbf{PlanTF~\cite{cheng2023rethinking} +\algname}    & \textbf{0.878} & \textbf{0.882} & \textbf{0.807} & \textbf{0.856}  \\ 
\bottomrule
\end{tabular}
}
\end{minipage}\hspace{.5cm}%
\begin{minipage}[t]{0.49\textwidth}
\centering
\caption{\textbf{Results of integrating \algname by strong prediction baselines in WOMD benchmark.}}
\centering
\label{tab:results_womd_synergy}
\vspace{2pt}
\scalebox{0.68}{
\begin{tabular}{l|ccc>{\columncolor[gray]{0.9}}c}
\toprule
 \multirow{2}{*}{Method} & \multicolumn{4}{c}{WOMD} \\
  & minADE $\downarrow$ & minFDE $\downarrow$ & MR $\downarrow$ & \textbf{mAP} $\uparrow$ \\ \midrule
 MTR~\cite{shi2022motion} & 0.6046 & 1.2251  & 0.1366 & 0.4164  \\
 \textbf{MTR~\cite{shi2022motion} +\algname}    & \textbf{0.5941} & \textbf{1.2049} & \textbf{0.1328} & \textbf{0.4249}  \\ \midrule
 EDA~\cite{lin2024eda}  & \textbf{0.5708} & \textbf{1.1730} & \textbf{0.1178} & 0.4353  \\
 \textbf{EDA~\cite{lin2024eda} +\algname}   & 0.5742 & 1.1853 & 0.1181 & \textbf{0.4407}  \\
\bottomrule
\end{tabular}
}
\end{minipage}
\vspace{-10pt}
\end{table}

\begin{table}[t]
\begin{minipage}[t]{0.49\textwidth}
    \centering
    \captionof{figure}{ \textbf{Results of different interactive agents number for local attention}. We observe a convergence effect for the selection of $K$.}
    \label{fig:ablation_topk}
    \includegraphics[width=\textwidth]{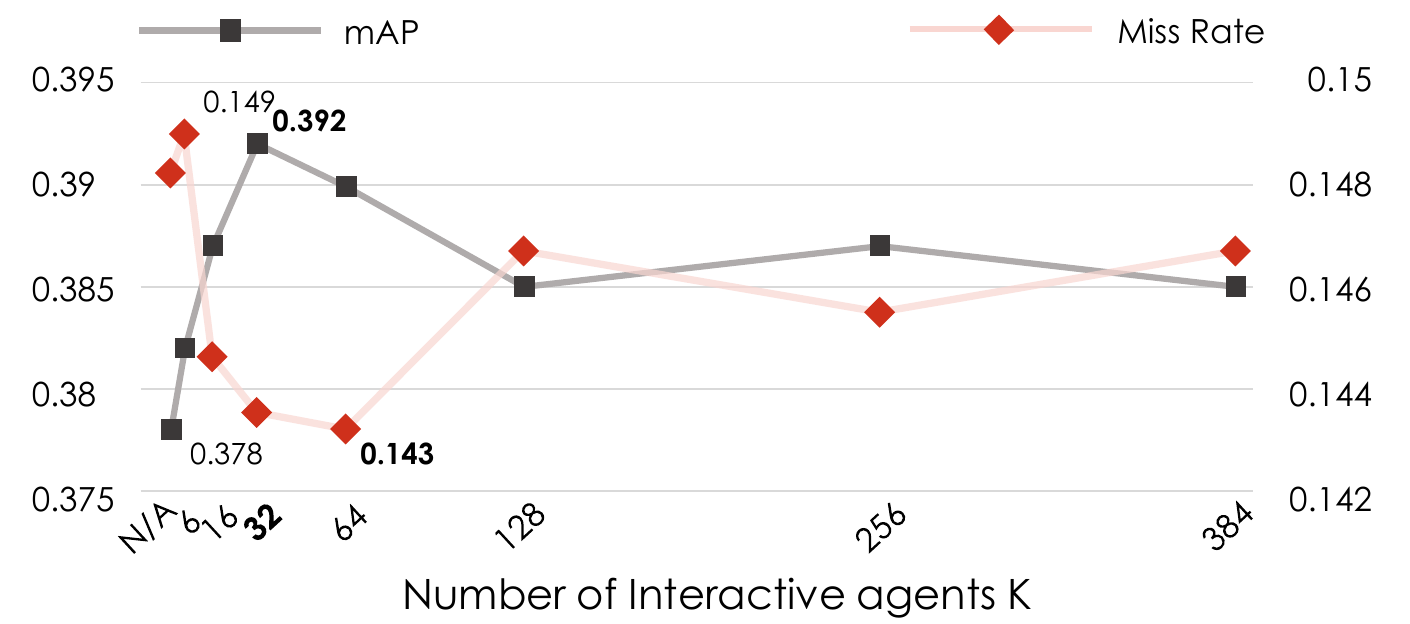}
\end{minipage}\hspace{.5cm}%
\begin{minipage}[t]{0.49\textwidth}
\centering
\captionof{table}{\textbf{Results of \modelname planning performance with different components.} Contingency is the key for closed-loop simulation.
}
\label{tab:ablation_componts}
\vspace{3pt}
\scalebox{0.76}{
\begin{tabular}{l|l|ccc}
\toprule
 \multirow{2}{*}{ID}&Ablative  & \multicolumn{3}{c}{nuPlan} \\
  & Components & OLS $\uparrow$ & CLS-NR $\uparrow$ & CLS $\uparrow$ \\ \midrule
\textcolor{dgray}{0}&\textcolor{dgray}{\modelname}  & \textcolor{dgray}{0.876} & \textcolor{dgray}{0.902}& \textcolor{dgray}{0.857}\\\midrule
1&No branched plan & 0.879 & \textbf{0.894}& \textbf{0.830}\\
2&No cost learning & \textbf{0.882} & 0.888& 0.807\\
3&\algname only & 0.877 & 0.876&  0.804\\
4&No local attention &0.871 & 0.852& 0.804 \\
5&Encoders only & 0.867 & 0.827 & 0.784\\\bottomrule
\end{tabular}
\vspace{.1cm}
}
\end{minipage}
\vspace{-6pt}
\end{table}

\begin{figure}[t!]
    \centering
    \includegraphics[width=\textwidth]{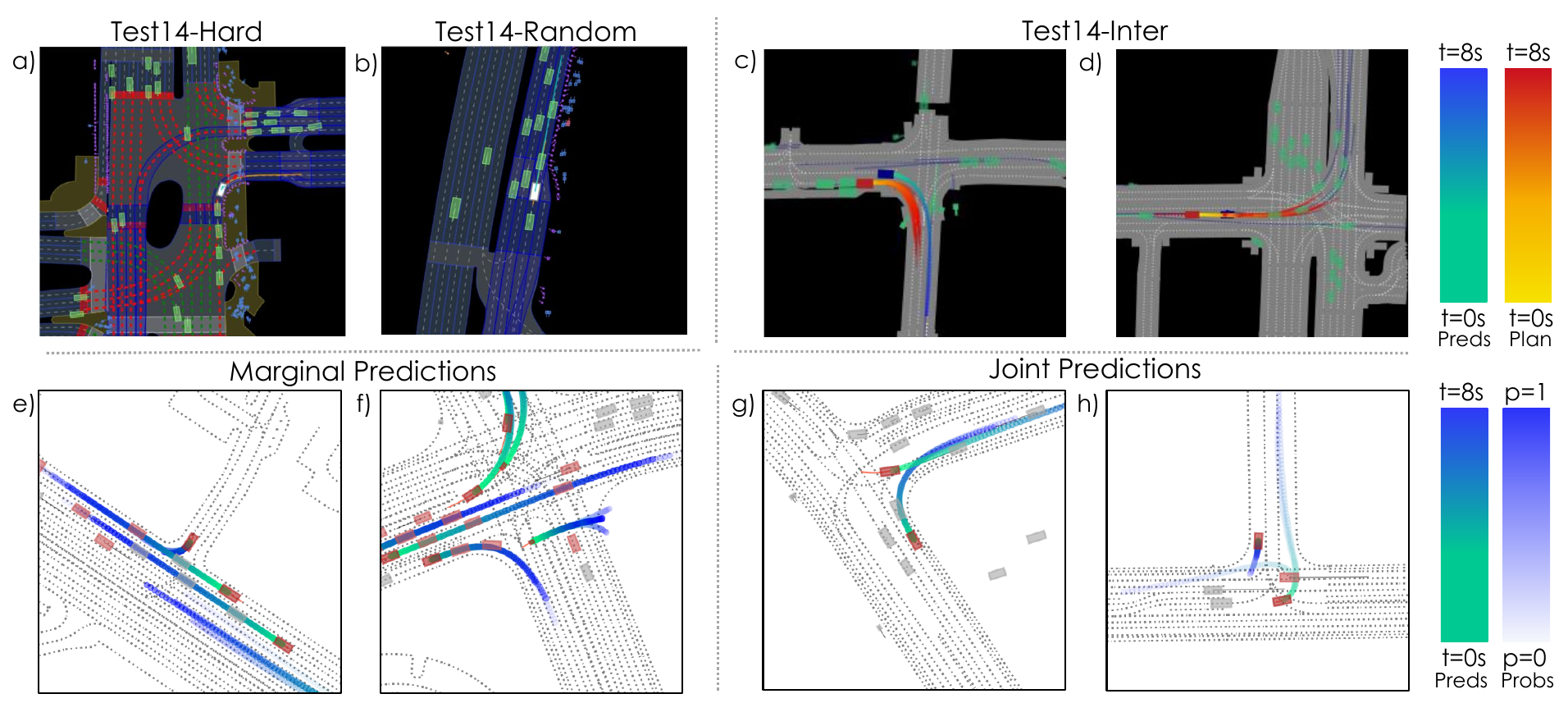}
    \vspace{-4pt}
    \caption{\textbf{Qualitative results of planning and prediction in nuPlan and WOMD.} \modelname performs compliant reaction simulations in a)  yielding for pedestrians; b) cruising in dense traffic. Interactive scenarios (c,d) further present the consistency of contingency learning. \modelname predicts both compliant marginal (e,f) and joint (g,h) multi-agent predictions under diverse scenarios. Future interactive behavior patterns can also be consistently reasoned (rendered in light red) with \algname.}
    \label{fig:vis}
\end{figure}

\section{Conclusion}
\label{sec:conclusion}
\vspace{-5pt}

In this paper, we present \algname, a topological new-look for multi-agent behavioral formulation. Derived by braid theory, the reasoning tasks for \algname are drafted supervising joint interactive patterns with integrated prediction and planning. A synergistic network, \modelname, is established with an imitative contingency learning process to boost compliant \algname reasoning. Experiments on nuPlan and WOMD verify \modelname's state-of-the-art performance in prediction and planning. 

\noindent\textbf{Limitation and Future work.} 
Current \algname 
considers one-step future topology alone, and focuses on prediction and planning. Future work would
be centered on developing a recursive version of \algname
in multi-step, 
multi-agent reasoning and coordination.
Another promising direction would be 
the connectivity of \algname upon perceptions as tracking for the end-to-end paradigm, as well as an  extension 
on reasoning behaviors under 3D scenarios for multiple autonomous agents.

\section*{Acknowledgments}
This work was supported in part by the Agency for Science, Technology and Research (A*STAR), Singapore, under the MTC Individual Research Grant (M22K2c0079), the ANR-NRF Joint Grant (No.NRF2021-NRF-ANR003 HM Science), the Ministry of Education (MOE), Singapore, under the Tier 2 Grant (MOE-T2EP50222-0002), National Key R\&D Program of China (2022ZD0160104), NSFC (62206172), and Shanghai Committee of Science and Technology (23YF1462000).

{
\small
\bibliographystyle{unsrt}
\bibliography{bibliography_short, bibliography_custom}
}

\newpage
\appendix
\section*{\Large{
\textit{Appendix}}}
\vskip8pt
\startcontents
{
\hypersetup{linkcolor=black}
\printcontents{}{1}{}
}
\clearpage

\section{Discussions}
\label{supp:discussion}
\vspace{-5pt}

Towards a better understanding of this work, we supplement intuitive questions that may raise.
Note that the following list does \textit{not} indicate the manuscript was submitted to a previous venue or not.

\noindent\textbf{Q1:} \textit{How does \algname bridge and discern with dense, sparse, and topological representations?}

\smallskip
\algname is derived from braid theory~\cite{artin1947theory}, reasoning the topology that explicitly labels consensual interactions as dense, occupancy-like intertwines from sparse, braided future trajectories of multi-agent behaviors. Unlike fixed occupancy~\cite{hu2021fiery,liu2023occupancy,agro2023implicit}, \algname dynamically forecasts behavioral interactions by agents collectively, serving as a guiding medium for node reasoning in 
 joint planning and prediction compared to standard sparse predictions~\cite{shi2024mtr++,jia2023hdgt,rowe2023fjmp}. Differentiating from typical topological approaches, \algname explicitly formulates coordinated multi-agent behaviors and reasons topology in occupancy manners, rather than relying on implicit relation graph learning~\cite{li2020evolvegraph,luo2023jfp,park2023leveraging} or complex braid inference~\cite{mavrogiannis2022analyzing,mavrogiannis2023abstracting,mavrogiannis2022winding}.

\medskip
\noindent\textbf{Q2:} \textit{Why is \algname allied with contingency instead of conditional or game-theoretic reasoning?}

\smallskip
Contingency plays the most suitable role in \modelname, addressing multi-agent uncertainty by deferring uncertain planning with long-term joint predictions from interactive agents under behavioral compliance. This aligns with \algname's reasoning targets, which aim to achieve a one-shot consensus among all behaviors by formulating future behavior as coordinated joint interactions. This synergy alleviates the challenges led by game-theoretic reasoning~\cite{huang2023gameformer,espinoza2022deep} or conditional integration~\cite{sun2022m2i,huang2023conditional,jia2024amp}, which are struggled by multi-step interactive rollouts and unstable joint behavioral patterns.

\medskip
\noindent\textbf{Q3:} \textit{What would be the broader impact and future direction?}

\smallskip
\algname steps the first trial towards an explicit topological formulation and reasoning paradigm for multi-agent interactive behaviors. This serves as a basis in exploring immense interactive behaviors in the real-world, and is reasoned by the autonomous agents jointly in a coordinated manner. For instance, \algname with enlarged scalability and dimension may summarize the topology for all behaviors in 3D scenarios or even larger spaciality, and may reasoned through end-to-end \modelname as a foundation behavioral model. Moreover, we can consider \algname's capability as collective maneuvers, which can be further leveraged in coordinating naturalistic and efficient decision-making for multiple autonomous agents.

\section{Properties of \algname}
\label{supp:property}
In this section, we supplement additional properties that characterize the formulation of \fullalgname (\algname). Analytically, \algname is highlighted with: 1) geometric invariant (\cref{theory1}); 2) approximated topological invariant (\cref{theorem:topo}); and 3) asymmetric topology (\cref{theorem:assym}).
\noindent\begin{theorem}[\textbf{Geometric Invariant}]
    \label{theory1}
    The topology results of $\mathcal{E}\subset\mathcal{G}$ in \algname remain unchanged given arbitrary geometrical transformations for the collective scene trajectories $\mathbf{Y}_n$.
\end{theorem}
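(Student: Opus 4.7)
The plan is to trace how an arbitrary rigid geometric transformation propagates through every object appearing in the definition of $e_{ij}$, and exhibit a complete cancellation. Concretely, let $T:\mathbb{R}^2\to\mathbb{R}^2$ act on a Cartesian pair as $T(\mathbf{y})=\mathbf{y}\mathbf{R}_T+\mathbf{b}_T$ with $\mathbf{R}_T\in SO(2)$ and $\mathbf{b}_T\in\mathbb{R}^2$, and extend it pointwise to trajectories $\mathbf{Y}_n\mapsto \mathbf{Y}'_n:=T(\mathbf{Y}_n)$. The first step is to identify how the local-frame parameters $(\mathbf{b}_n,\mathbf{R}_n)$ must transform: since they encode the left-hand transform into agent $A_n$'s ego frame (built from $A_n$'s own pose at $t=0$, a quantity that moves with $T$), the induced post-transformation parameters are $\mathbf{b}'_n=\mathbf{b}_n\mathbf{R}_T+\mathbf{b}_T$ and $\mathbf{R}'_n=\mathbf{R}_T^{\top}\mathbf{R}_n$. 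This is the one genuine modeling step that fixes the action of $T$ on the braid functions, and it is the only place where one must be careful.

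Next I would plug these expressions into the braid map $f^i_j:\mathbf{y}\mapsto(\mathbf{y}-\mathbf{b}_i)\mathbf{R}_i$ as defined in \cref{sec:method_formualtion}. A direct computation yields
\begin{equation*}
f'^{\,i}_{j}(\mathbf{y}')=\bigl(\mathbf{y}\mathbf{R}_T+\mathbf{b}_T-\mathbf{b}_i\mathbf{R}_T-\mathbf{b}_T\bigr)\mathbf{R}_T^{\top}\mathbf{R}_i
=(\mathbf{y}-\mathbf{b}_i)\mathbf{R}_T\mathbf{R}_T^{\top}\mathbf{R}_i
=f^{i}_{j}(\mathbf{y}),
\end{equation*}
so every braid function is pointwise invariant under the joint action of $T$ on trajectories and local frames. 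In particular the lateral components used to build the intertwine test are also invariant.

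From there the rest is immediate: the intertwine indicator $\mathbf{I}$ in the definition $e_{ij}=\max_{t}\mathbf{I}\bigl(f^{i}_{i}(\mathbf{y}^t_i),f^{i}_{j}(\mathbf{y}^t_j)\bigr)$ receives identical arguments before and after transformation, hence every $e_{ij}$ is preserved and so is the full edge topology $\mathcal{E}$. I would finally note that because $\mathbf{R}_T\mathbf{R}_T^{\top}=\mathbf{I}$ was the only property of $\mathbf{R}_T$ used, the statement carries over to any composition of translations, rotations, and (by the same cancellation) reflections; if one is willing to interpret "geometric transformation'' more broadly, the same derivation shows invariance holds exactly for the subgroup that consistently acts on both trajectories and ego frames.

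The main obstacle is conceptual rather than computational: justifying that $(\mathbf{b}_n,\mathbf{R}_n)$ must co-transform with the scene. Without this, $T$ could be read as acting only on trajectories while leaving the ego frames fixed, in which case invariance would fail. I would therefore devote a short paragraph to stating the convention explicitly (the braid functions are defined relative to agent-centred frames, which are intrinsic to the agents and therefore move with them), after which the algebraic cancellation above finishes the proof.
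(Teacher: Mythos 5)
Your proposal is correct and follows essentially the same route as the paper: both arguments rest on the observation that the transformation into agent $i$'s local frame absorbs any global rigid motion, so the composed map $\mathbf{I}\circ f^i_\cdot$ (and hence each $e_{ij}$) is unchanged. Your version is in fact more explicit than the paper's — you state the co-transformation convention for $(\mathbf{b}_n,\mathbf{R}_n)$ and carry out the cancellation $(\mathbf{y}-\mathbf{b}_i)\mathbf{R}_T\mathbf{R}_T^{\top}\mathbf{R}_i=f^i_j(\mathbf{y})$ that the paper only asserts via $h(f(\mathbf{Y}_i,\mathbf{Y}_j))=h(f(\mathbf{H}\mathbf{Y}_i+\mathbf{b},\mathbf{H}\mathbf{Y}_j+\mathbf{b}))$.
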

\begin{proof}
    Given arbitrary rotation $\mathbf{H}\in\mathbb{R}^{2\times2}$ and shifting $\mathbf{b}\in\mathbb{R}^{2}$.  Consider the mapping $g$ for elementary topology $e_{ij}\in\mathcal{E}$ from future trajectories $(\mathbf{Y}_i,\mathbf{Y}_j)$, $i\neq j\in[1,N_a]$, the local transformation $f: g\rightarrow h\circ f$ to $i$'s coordinate is invariant, such that $h(f(\mathbf{Y}_i,\mathbf{Y}_j))=h(f(\mathbf{H}\mathbf{Y}_i+\mathbf{b}, \mathbf{H}\mathbf{Y}_j+\mathbf{b}))$. Hence, given the function sets $f^i_j\in \sigma_i \subset f, \mathbf{I}\in h$ defined in~\cref{sec:method_formualtion}; $e_{ij}\in\mathcal{E}$ is also invariant.
\end{proof}
\begin{remark}
The \Cref{theory1} proves the behavioral stability of \algname given arbitrary multi-agent trajectories patterns for planning and predictions. Any rotations and movement of the original scene will not interfere with the formulated results of \algname. 
\end{remark}
\begin{definition}[\textbf{Topological Invariant}]
\label{definition1}
Given future trajectory pairs $(\mathbf{Y}_i,\mathbf{Y}_j)$, $i\neq j\in[1,N_A]$ with certain current heading $(\theta^0_i, \theta^0_j)$, the sum of future relative angles (winding number) $w_{ij}=\frac{1}{2\pi}\sum^{T_f}_0\Delta\theta^t_{ij}$ form its first-order~\cite{berger2001topological} topological invariant.
\begin{proof}
Consider the polar representation for the closed form $\psi_i(t)=||\psi_i(t)||e^{i\theta_i(t)}$. Where $\psi_i:[0,T_m] \rightarrow \mathbb{C} \backslash \{0\}, i\in[0,n]$, we can define the winding function $\lambda_i(t)=\frac{1}{2\pi i}\int_{\psi_i}dz/z, z=\psi_i(t), t\in[0, T_m]$. This Cauchy formula~\cite{wang2022social} can be further integrated as:
\begin{equation}
    \label{equ:cauchy}
    \lambda_i(t)=\frac{1}{2\pi i} \log(\frac{||\psi_i(t)||}{||\psi_i(0)||}) + \frac{1}{2\pi}(\theta_i(t) - \theta_i(0)).
\end{equation}
We are interested in the real (first-order) part of $\lambda_i(t)$ which is an invariant topologically. Hence, the trajectory pairs  $(\mathbf{Y}_i,\mathbf{Y}_j), 0<t\leq T_f<T_m$ can be described as: $\mathbf{Y}_i=\psi_i:(0,T_f]$. The joint invariant across future $w_{ij}=\sum^{T_f}_t(\lambda_i(t)-\lambda_j(t))$ then becomes:
\begin{equation}
w_{ij}=\frac{1}{2\pi}\sum^{T_f}_t( \theta^t_i - \theta^t_j) - \frac{1}{2\pi}\sum^{T_f}_t( \theta^0_i - \theta^0_j).
\end{equation}
As the current heading pair $(\theta^0_i, \theta^0_j)$ is certain, the invariant becomes $w_{ij}=\frac{1}{2\pi}\sum^{T_f}_0\Delta\theta^t_{ij}$ which proofs the definition.
\end{proof}
\end{definition}
\begin{corollary}
\label{coro:topo_invariance}
Given any $\Delta\theta^t_{ij}\in[-\frac{\pi}{2},\frac{\pi}{2}]$, where $0<i.j<N_a, t\in(0, T_f]$, and the constant $\eta_i,\eta_j\in\mathbb{R}$, the transformed  $w^{T}_{ij}=\sum^{T_f}_0 (\eta_j\sin\Delta\theta^t_{ij} - \eta_i\sin\theta^t_i)$ is also topological invariant.
\end{corollary}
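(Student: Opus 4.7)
The plan is to reduce the invariance of $w^T_{ij}$ to the invariance of the winding number $w_{ij}$ already established in \Cref{definition1}, by exploiting the analytic properties of $\sin$ on the restricted interval $[-\pi/2,\pi/2]$. The crucial observation is that on this interval $\sin$ is a real-analytic, strictly monotonic, orientation-preserving homeomorphism with Taylor expansion $\sin x = x - x^3/6 + O(x^5)$, so to first order the sine values coincide with the raw angles. Because the topological invariance asserted by \Cref{definition1} is explicitly a first-order notion (citing Berger), matching orders should suffice.

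First I would decompose $w^{T}_{ij}=\sum_{t=0}^{T_f}\bigl(\eta_j\sin\Delta\theta^t_{ij}-\eta_i\sin\theta^t_i\bigr)$ via the Taylor expansion as
$$w^T_{ij} \;=\; \eta_j\sum_{t=0}^{T_f}\Delta\theta^t_{ij}\;-\;\eta_i\sum_{t=0}^{T_f}\theta^t_i \;+\; R,$$
where $R$ collects the cubic-and-higher remainders. The first sum is $2\pi\eta_j\, w_{ij}$, i.e.\ a fixed scalar multiple of the first-order topological invariant supplied by \Cref{definition1}. For the second sum I would re-run the Cauchy-formula argument from the proof of \Cref{definition1} but applied to the single trajectory $\psi_i$ rather than to a pair: the real part of $\lambda_i(t)$ equals $(\theta_i(t)-\theta_i(0))/2\pi$, so with the initial heading $\theta^0_i$ fixed, $\sum_t\theta^t_i$ is itself a first-order invariant (up to an additive constant absorbed into the $\eta_i\theta^0_i$ offset).

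Next I would combine the two pieces: any fixed-coefficient linear combination of first-order invariants is itself a first-order invariant, so the principal part of $w^T_{ij}$ inherits the invariance. The main obstacle will be controlling the remainder $R$. This is where the domain restriction $\Delta\theta^t_{ij},\theta^t_i\in[-\pi/2,\pi/2]$ becomes essential: on this interval the Taylor remainder for $\sin$ is uniformly bounded, and the homotopies that preserve $w_{ij}$ act on the angular arguments in a way compatible with this bound, so that $R$ does not contribute to the first-order topological data. Concretely, I would invoke the fact that the Berger first-order invariant is preserved under continuous deformations of the path that do not cross the branch cut; since $\sin$ is bi-Lipschitz and sign-preserving on $[-\pi/2,\pi/2]$, any such deformation induces an order-preserving reparametrisation of the sine values, leaving the leading topological content unchanged. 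Putting the principal part and the bounded remainder together then yields the claim.
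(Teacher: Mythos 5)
Your route is genuinely different from the paper's, and it has a gap that the difference exposes. The paper's proof is a one-line monotonicity argument: because $\sin(\cdot)$ is monotone (hence injective) on $[-\frac{\pi}{2},\frac{\pi}{2}]$, the transformed sum $\sum^{T_f}_0\eta_i\sin\theta^t_i$ uniquely determines $\mathbf{Y}_i$, and $w^{T}_{ij}$ is then the unique image of $w_{ij}$ together with $\mathbf{Y}_i$ under the defined transformation, so it inherits the invariance. You instead Taylor-expand $\sin$, identify the linear part with $2\pi\eta_j w_{ij}$ minus a single-trajectory analogue, and push everything nonlinear into a remainder $R$.

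The step that fails is the treatment of $R$. You argue only that the cubic-and-higher remainder is \emph{uniformly bounded} on $[-\frac{\pi}{2},\frac{\pi}{2}]$; but a bounded term that varies under the admissible deformations still destroys invariance, and $\sum_t\sin\Delta\theta^t_{ij}$ is not a function of $\sum_t\Delta\theta^t_{ij}$ alone — two angle sequences with the same winding number generally give different sums of sines — so the principal part plus an uncontrolled $R$ does not reduce to a function of known invariants. You are also conflating two senses of ``first-order'': in \Cref{definition1} it designates the real part of the complex winding function $\lambda_i(t)$ (as opposed to the $\log$-modulus part), not the linear term of a Taylor series, so matching Taylor orders does not connect to the invariance established there. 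The observation you relegate to the end — that $\sin$ is monotone and sign-preserving on the restricted interval — is in fact the entire content of the paper's argument; if you promote it from a remark about $R$ to the main mechanism (an injective reparametrisation of invariant data is itself invariant), the Taylor decomposition becomes unnecessary and the gap closes.
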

\begin{proof}
    The defined function $\sin(\cdot)$ is a monotone mapping under $[-\frac{\pi}{2},\frac{\pi}{2}]$. Hence, this firstly enables $\sum^{T_f}_0\eta_i\sin\theta^t_i$ uniquely defines $\mathbf{Y}_i$. More than that, $w^{T}_{ij}$ is the unique mapping value of $w_{ij}$ with $\mathbf{Y}_i$ under defined transformation, and thereby keep the invariant property.
\end{proof}

\begin{theorem}
\label{theorem:topo}
    The edge topology $\mathcal{E}\subset\mathcal{G}$ is an approximate of topological invariant, so that $e_{ij}\in\mathcal{E}, 0<i,j\leq N_a$ is characterized by $w_{ij}$ .
\noindent\begin{proof}
    Given future trajectories $\mathbf{Y}_i,\mathbf{Y}_j$, we consider the braid functions $\sigma_i$ maps monotonically increased transformations $f^i_i,f^i_j$ to $i$'s local coordinate,as defined in~\cref{sec:method_formualtion}. We assume a continuous future horizon $(0,T_f]$ where the headings for $f^i_j(\mathbf{Y}_j)$ is defined by relative angles $\Delta\theta_{ij}(t)$. Thereby, the transformed lateral trajectory for agent $j$ can be formed as: $\int^t_0\eta_j\sin\Delta\theta_{ij}(t)dt$. Similarly, $f^i_i(\mathbf{Y}_i)$ can be formed as $\int^t_0\eta_j\sin\theta_{i}(t)dt$, where $\eta_i, \eta_j$ denotes small constant step lengths.These form the original intersection function $\mathbf{I}$ in~\cref{sec:method_formualtion} as:
    \noindent\begin{equation}
    \label{equ:intersect}
    \mathbf{I}\left(f^i_i(\mathbf{y}^t_i),f^i_j(\mathbf{y}^t_j)\right) \rightarrow \int^t_0 (\eta_j\sin\Delta\theta^t_{ij} - \eta_i\sin\theta^t_i)dt,
    \end{equation}
    where $\mathbf{I}(\cdot)=0$ denotes braid intertwine for interactive behaviors. As the term in~\cref{coro:topo_invariance} of $w^{T}_{ij}$ (the sum of the right term) is an approximate (discretization) of the right term, this proves the edge topology $e_{ij}\in\mathcal{E}\rightarrow\max_{T_f}\mathbf{I}\left(\cdot\right)$ as the approximation of topological invariant. 
\end{proof}
\begin{remark}
The~\cref{theorem:topo} proves the generality of \algname in terms of future interactive behavioral patterns. The approximated topological invariant property prompts a representative of various future states sharing similar behavioral or identical interactive patterns by \algname.   
\end{remark}
\end{theorem}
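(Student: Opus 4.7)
\medskip

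The plan is to reduce the edge-topology term $e_{ij}$, which is defined through the geometric intertwine indicator $\mathbf{I}$ acting on braid-transformed trajectories, to a discrete form that matches the invariant $w^{T}_{ij}$ of Corollary~\ref{coro:topo_invariance}, and then invoke that corollary to transfer topological invariance.

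First I would fix the local frame of agent $A_i$ by applying the braid map $\sigma_i$, so that $f^i_i(\mathbf{Y}_i)$ and $f^i_j(\mathbf{Y}_j)$ are expressed in the coordinates $(\vec{y},\vec{t})$ used by $\mathbf{I}$. Thanks to Theorem~\ref{theory1}, this reframing changes nothing about $e_{ij}$, and it lets me parametrize the lateral position of each curve by a small constant step length. Concretely, I would write the lateral coordinate of $f^i_i(\mathbf{Y}_i)$ at time $t$ as $\int_0^t \eta_i \sin\theta_i(s)\,ds$ and that of $f^i_j(\mathbf{Y}_j)$ as $\int_0^t \eta_j \sin\Delta\theta_{ij}(s)\,ds$, with $\eta_i,\eta_j$ the (small) local step lengths in the braid transform.

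Next I would examine the intertwine indicator. Since $\mathbf{I}$ tests whether two monotone segments in $(\vec{y},\vec{t})$ cross, its activation is equivalent to a sign change of the lateral difference $\int_0^t(\eta_j\sin\Delta\theta_{ij}(s) - \eta_i\sin\theta_i(s))\,ds$. Taking the maximum over the horizon $T_f$, as in $e_{ij}=\max_{t}\mathbf{I}(\cdot)$, captures whether such a crossing occurs anywhere in $(0,T_f]$; the extremal value of the integrand therefore characterizes the presence of an intertwine. Replacing the integral by its Riemann sum over the discrete future steps yields exactly $w^{T}_{ij}=\sum_{0}^{T_f}(\eta_j\sin\Delta\theta_{ij}^t - \eta_i\sin\theta_i^t)$, which is the quantity shown to be topologically invariant in Corollary~\ref{coro:topo_invariance}. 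Consequently, $e_{ij}$ is a discrete approximation of a functional of the topological invariant $w_{ij}$, and two trajectory pairs that share the same invariant produce the same $e_{ij}$ up to discretization error, establishing the claim.

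I expect the main obstacle to be the step that converts the geometric intertwine condition (which is an existence-of-crossing statement obtained by a \emph{max}) into an algebraic condition on the signed sum $w^{T}_{ij}$. The subtle point is that $\mathbf{I}$ only sees whether the difference of lateral positions vanishes at some step, whereas $w^{T}_{ij}$ accumulates that difference over all steps; I would need the monotonicity of the braid maps (and the restriction $\Delta\theta_{ij}^t\in[-\pi/2,\pi/2]$ assumed in the corollary) to argue that the sign pattern of the running sum is indeed in bijection with the occurrence of a crossing, up to an approximation error that vanishes as the step size shrinks. Once this bijection is made precise, the remainder of the argument is a direct application of Corollary~\ref{coro:topo_invariance} and Definition~\ref{definition1}.
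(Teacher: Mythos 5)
Your proposal follows essentially the same route as the paper's own proof: transform both trajectories into agent $i$'s local frame via the braid functions, express the lateral coordinates as integrals $\int_0^t \eta\sin(\cdot)\,ds$ of the (relative) headings, identify the intertwine indicator $\mathbf{I}$ with the vanishing of their difference, and observe that the discretization of that difference is exactly the invariant $w^{T}_{ij}$ of Corollary~\ref{coro:topo_invariance}. You even explicitly flag the one step the paper glosses over — that the existence-of-crossing condition captured by $\max_t\mathbf{I}(\cdot)$ is only heuristically identified with the accumulated signed sum $w^{T}_{ij}$ — so your treatment is, if anything, slightly more candid about where the approximation lives.
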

\begin{theorem}[\textbf{Asymmetric Topology}]
\label{theorem:assym}
    Edge topology $\mathcal{E}\subset\mathcal{G}$ is not symmetric such that $\exists i,j, e_{ij}\not\equiv e_{ji},0<i,j\leq N_a$.
\begin{proof}
    Given~\cref{equ:intersect} defined in~\cref{theorem:topo},we can always construct a case $\exists t_i,t_j\in(0,T_f]$, the intersection $\mathbf{I}\left(f^i_i(\mathbf{y}^{t_i}_i),f^i_j(\mathbf{y}^{t_i}_j)\right)=0$, but $\mathbf{I}(f^j_i(\mathbf{y}^{t_j}_j),f^j_i(\mathbf{y}^{t_j}_i))\neq0$, which prove the claim.
\end{proof}
\begin{remark}
The~\cref{theorem:assym} proves a more naturalistic interactive behavior of \algname. It is likely in a real-world scenario that the future behavior of agent $A_i$ is interacted by agent $A_j$, while $A_j$ does not.
\end{remark}
\end{theorem}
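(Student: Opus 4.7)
The plan is to prove this existential claim by constructing an explicit counterexample pair of agents whose edge topology values differ when the indices are swapped. I would start by unwinding the definition: $e_{ij} = \max_t \mathbf{I}\bigl(f^i_i(\mathbf{y}^t_i), f^i_j(\mathbf{y}^t_j)\bigr)$ aggregates intertwine indicators computed in agent $i$'s local lateral--time frame (via $\mathbf{b}_i, \mathbf{R}_i$ inside the maps $f^i_i, f^i_j \in \sigma_i$), whereas $e_{ji}$ aggregates intertwines computed after transforming into agent $j$'s frame. Because the coordinate change applied prior to the segment-intersection test depends on which agent is chosen as reference, the two lateral trajectory configurations are genuinely distinct objects; asymmetry then reduces to showing these two families can disagree for at least one configuration.

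Next, I would exhibit a concrete two-agent scenario. A clean choice is two agents crossing near right angles with asymmetric forward progress: agent $i$ travels straight along its heading while agent $j$ cuts across ahead of $i$ but at a downstream time in $j$'s own frame. In $\sigma_i$ coordinates the transformed lateral trajectory of $j$ sweeps across $i$'s forward axis within the horizon $T_f$, yielding $\mathbf{I} \neq 0$ at some $t_i$; in $\sigma_j$ coordinates the roles of ``ahead'' and ``behind'' flip, and the transformed lateral trajectory of $i$ does not produce a forward intertwine on $(0, T_f]$. Using the sinusoidal approximation from~\cref{theorem:topo}, I would make this quantitative by displaying a choice of constant headings and step lengths where $\int_0^{t_i} \bigl(\eta_j \sin\Delta\theta^t_{ij} - \eta_i \sin\theta^t_i\bigr)\,dt = 0$ for some $t_i \in (0, T_f]$, while the index-swapped integrand has no zero on $(0, T_f]$. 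Applying $\max_t$ then gives $e_{ij} \neq e_{ji}$ as claimed.

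The main obstacle is exhibitional rigor: the intertwine indicator $\mathbf{I}$ is defined via segment intersection~\cite{antonio1992faster} in a lateral--time plane, and one must confirm the constructed pair really meets the intersection condition in one frame and fails it in the other rather than merely appearing to. If the segment-intersection bookkeeping becomes awkward, a cleaner fallback argument is structural: the per-agent transform $\mathbf{Y}_k \mapsto (\mathbf{Y}_k - \mathbf{b}_n)\mathbf{R}_n$ is not invariant under swapping $n$, so generically the induced lateral curves, and hence the forward intertwines recorded by $\mathbf{I}$, carry no symmetry in $i \leftrightarrow j$. A single generic configuration of non-parallel headings then suffices to witness the inequality, completing the proof of existence.
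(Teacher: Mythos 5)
Your proposal is correct and follows essentially the same route as the paper: the paper's own proof simply asserts that one can construct a configuration where the frame-dependent intertwine test $\mathbf{I}$ fires in agent $i$'s local coordinates but not in agent $j$'s, which is exactly the existence-by-counterexample you propose (your version is in fact more explicit, since you sketch the crossing geometry and tie it back to the sinusoidal form of \cref{theorem:topo}). The only caution is the sign convention for $\mathbf{I}$ — the appendix treats $\mathbf{I}(\cdot)=0$ as the intertwine (zero of the integral) while the main text treats $\mathbf{I}$ as a positive indicator maximized over $t$ — so keep your witness consistent with one convention throughout.
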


\smallskip
\noindent\textbf{Computational complexity.} The complexity in computing full $\mathcal{E}$ is $\mathcal{O}(N^2_a)$. In practice, we downscale the sourced as the agents of interests $N_I<N_a$, such that $\mathcal{O}(N_IN_a)$. It is much less than braid sequence inference~\cite{mavrogiannis2023abstracting} with maximum $\mathcal{O}((N_a-1)^{N_a})$ computational costs. Further analytical proof of computational efficiency leveraging braids can be found in~\cite{mavrogiannis2022analyzing}.

\section{Implementation Details}
\label{supp:impl-detail}
\vspace{-5pt}

In this section, we instantiate further details for \algname on the configurations for \modelname structure, and provide contingency learning paradigms for both prediction and planning challenges.

\begin{figure}[t]
    \centering
    \includegraphics[width=\textwidth]{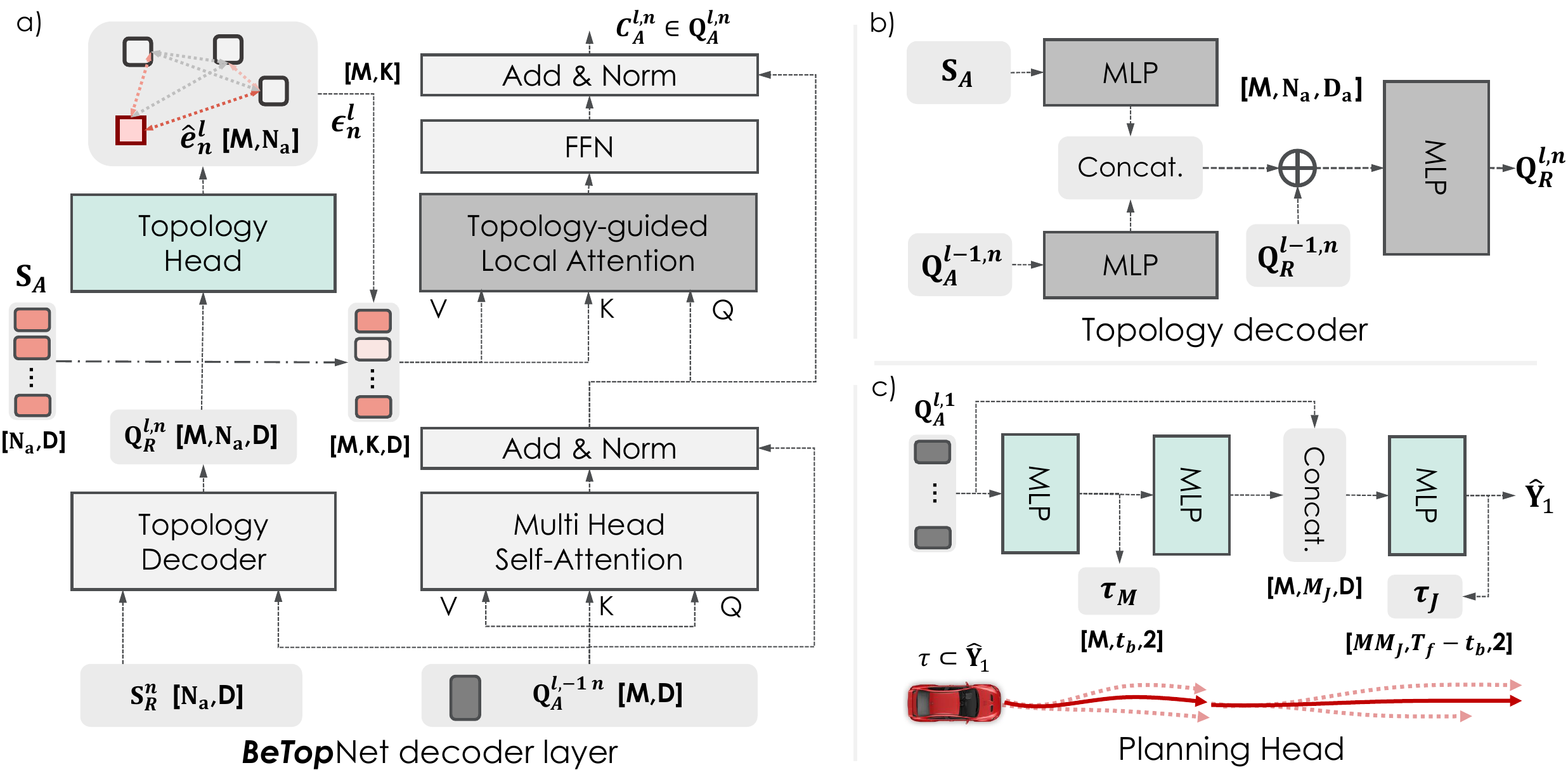}
    \caption{\textbf{Structural details in \modelname.} a) The learning structure of single synergistic decoder layer featuring $\texttt{TransDecoder}$ and $\texttt{TopoAttn}$ design; b) The structure inside topology decoder network of $\texttt{TopoDecoder}$; c) Branched planning head design corresponding to contingency planning.}
    \label{fig:structure_detail}
\end{figure}

\subsection{Model Structure}
\label{supp:model_structure}
\vspace{-4pt}

Subjecting to different testing requirements defined in nuPlan and WOMD, we set up two model variants for \modelname in formulating planning and prediction challenges. Apart from topology reasoning for interactive behaviors, for planning challenges, \modelname integrates both tasks of prediction and planning. In the prediction challenges under marginal and joint settings, \modelname is allocated only with the prediction parts. The structural details are illustrated below. 

\smallskip
\noindent\textbf{Scene inputs.} Carving the driving scenarios involves historical agent states $\mathbf{X}$ and map polylines $\mathbf{M}$ as scene inputs. For planning settings, we collect scene agent states with past $T_h=2$ seconds at 10Hz, leaving basic kinematic states as $(x,y,v_x,v_y,\theta)$, joining with agent shapes and types. We only keep the current state for ego vehicle (AV) in preventing closed-loop gap~\cite{cheng2023rethinking,dauner2023parting,li2023ego} with open-loop training as recently discussed in the community. $N_m=256$ segments of the map with length $L_m=20$ are gathered by scene-centric manners considering positions, traffic lights, and speed limits. The prediction task is built on WOMD considering $T_h=1$ seconds at 10Hz with scenarios of larger scalability. It is followed by full scene agents with $N_m=768$ map segments of identical states for both $\mathbf{X}$ and $\mathbf{M}$. 

\smallskip
\noindent\textbf{Scene encoder.} Both scene attributes $\mathbf{X}, \mathbf{M}$ are firstly encoded leveraging layered point encoder~\cite{qi2017pointnet} to hidden dimension $D$ shared throughout the \modelname structures, with $D=128$ for planning and $D=256$ for prediction tasks. A stack of Transformer encoders is then devised with $4$ layers in planning and $6$ in prediction for $\mathbf{S}_A,\mathbf{S}_M$. Due to scalable settings for prediction tasks, local attention with the nearest 16 keys is built in each layer. Following~\cite{shi2022motion,cheng2023rethinking}, the dense prediction head is adopted for all agents after the encoder, enhancing future semantics.

\smallskip
\noindent\textbf{Synergistic decoder.} Depicted in~\cref{fig:structure_detail}, the decoder structure is founded by an iterative stack of $L$ Transformer decoders querying $M$ modes of future trajectories $\hat{\mathbf{Y}}$ with dual stream in reasoning topology $\hat{\mathcal{E}}$. Consisting of $L=6$ and $L=4$ decoders for prediction and planning respectively, it is initialized jointly by relative features $\mathbf{Q}^{0,n}_R$ and decoding queries $\mathbf{Q}^{0,n}_A$. Relative attributes $\mathbf{S}_R$ are computed efficiently following~\cite{cui2023gorela} for relative distance and headings. $M=6$ learnable embedding are devised as $\mathbf{Q}^{0,n}_A$ for planning, and we utilize the anchored ones~\cite{shi2022motion} with $M=64$ in prediction tasks. As displayed in~\cref{fig:structure_detail}(a), dual queries $\mathbf{Q}^{j-1,n}_{A},\mathbf{Q}^{j-1,n}_{R}$ are served from the last level. The decoding process (following~\cref{equ:network}) iteratively reasons $\hat{e}^l_n$ from $\texttt{TopoDecoder}$, serving as a prior guiding agent semantics by $\texttt{TopoAttn}$ inside $\texttt{TransDecoder}$, which concurrently aggregates scene semantics from agents $\mathbf{S}_A$ and maps $\mathbf{S}_M$. Expressly, the structure of $\texttt{TopoDecoder}$ (\cref{fig:structure_detail}(b)) comprises simple MLPs and update $\mathbf{Q}^{l,n}_R$ by concatenation sourcing query features $\mathbf{Q}^{j-1,n}_{A}$ with agent semantics $\mathbf{S}_A$, and connect residuals $\mathbf{Q}^{l-1,n}_R$ from last layer. Decoding queries $\mathbf{Q}^{l,n}_A$ is updated by a concatenation from aggregated agent feature $\mathbf{C}^{l,n}_A$, map features $\mathbf{C}^{l,n}_M$, and agent semantics $\mathbf{S}^n_A$ directly from encoder. Agent feature $\mathbf{C}^{l,n}_A$ is aggregated by $\texttt{TopoAttn}$, where the local attention is devised using deployments from~\cite{yang2022unified} indexing $K=32$ agents from reasoned topology $\hat{e}^l_n$. We omit the aggregation process for map features, which performs the vanilla Transformer decoder structure for planning and dynamic collection form by~\cite{shi2022motion} under prediction tasks for hefty map features.

\smallskip
\noindent\textbf{Reason heads.} Following the contents in~\cref{sec:method_framework}, reason heads in decoding prediction and topology follow simple MLPs given respective decoding queries. For the planning head, it leverages a cascaded design for branched contingency planning with multi-modalities (\cref{fig:structure_detail}(c)). Specifically, the short-planning $\tau_M$ is decoded by the AV future states $\{\hat{y}^{1:t_b}_1\}_m$ from first stage head with $m\in M$ modes, where $t_b=3$ denotes the branching time. They are then detached and leveraged as prior for the branched planning. Successive MLPs project and reshape the short-term contingency prior by $\mathbb{R}^{M\times M_J\times D}$ for $M_J=6$ branches planning $\mathcal{T}^m_J$ under each of $\tau^m_M$. Further concatenated by broadcasted decoding queries, the planning head generates $M\cdot M_J$ trajectories $\hat{\mathbf{Y}}_1$ for AV.

\subsection{Imitative Contingency Learning}
\label{supp:cost_func}
\vspace{-4pt}

\noindent\textbf{Efficient joint prediction recombination.} Retrieving the top-performed joint predictions from full marginal predictions $\hat{Y}_M$ sequentially is time-consuming with exponentially complexity. Hence, we firstly downscale the potentially interested agents $N_I$ by sorting the AV-reasoned topology $\hat{e}^L_1$ with the largest $K_M=4$ value as index given the planning task. For the joint prediction task, $N_I=2$ is annotated already from the original data. Then, we leverage the tensor broadcasting mechanism in efficiently retrieving $M=6$ largest joint distributions $P(\hat{Y}^M_J)$ and joint trajectories $\hat{Y}^M_J$ from $N_I$ interacted agents. Given a tensor $\mathbf{P}_J\in\mathbb{R}^{M^{N_I}}$ initialized by ones, the joint score is computed by $N_I$ times of iterative broadcasting $\hat{\mathbf{p}}_n$ on the $n$-th dimension for $\mathbf{P}_J$ as $P_J=\max_{M}\prod^{N_I}_n\mathbf{P}^{(n)}_J\otimes\hat{\mathbf{p}}_n$. This process only costs 1.6ms in computing $N_I=4$ joint predictions for contingency planning.

\smallskip
\noindent\textbf{Imitative contingency objectives.} Followed by learning objectives derived in~\cref{sec:method_contingency_plan}, the imitation objectives for each layer can be represented as $\mathcal{L}_{\text{IL}}=\mathcal{L}_{\mathcal{V}} + \lambda_1 \mathcal{L}_{\mathcal{E}}$. $\lambda_1=50$ weighting BCE loss for edge topology reasoning, the NLL loss for $\mathcal{L}_{\mathcal{V}}$ is formulated as:
\begin{equation}
   \scalebox{0.96}{$
    \mathcal{L}_{\text{NLL}} = \log \sigma_x+\log \sigma_y+\frac{ \log \left(1-\rho^2\right)}{2}+\frac{1}{2\left(1-\rho^2\right)}\left(\left(\frac{d_x}{\sigma_x}\right)^2+\left(\frac{d_y}{\sigma_y}\right)^2-2 \rho \frac{d_x d_y}{\sigma_x \sigma_y}\right)-\log p(m^*),
    $}
\end{equation}
where $d_x,d_y$ denotes the difference with ground-truths. In determining the component $m^*$, we leverage a winner-take-all (WTA) strategy~\cite{zhou2023query} in planning by measuring the average displacements (ADE) with groung-truths. For prediction tasks, $m^*$ is selected from the closest anchor as in~\cite{lin2024eda}. For the learnable cost functions $\max C_M(\cdot),C_J(\cdot)$ in contingency planning, we leverage the repulsive potential field~\cite{huang2023gameformer} delineating planning with prediction by $\phi=\min_d 1/(1+d(\tau,\hat{\mathbf{y}}))$. For $\max C_M(\cdot)$, $\phi$ is gathered across $T_f$ considering the worst case under full marginal prediction $\hat{Y}_M$ comprising $N_a=32$ scene agents. For the branched cost $C_J(\cdot)$, $\phi$ for each branch is computed considering joint prediction from $N_I=4$ agents. Following the objective defined in~\cref{equ:conti}, the learnable contingency cost is defined as: $\mathcal{L}_{\text{CL}}=C_M + \sum^M_mP(\hat{Y}^m_J)C^m_J$. Hence, the general objectives for planning become:
\begin{equation}
    \mathcal{L} = \mathcal{L}_{\mathcal{V}} + \lambda_1 \mathcal{L}_{\mathcal{E}} + \lambda_2\mathcal{L}_{\text{CL}},
\end{equation}
where $\lambda_2=5$ is the contingency costs weight. Prediction tasks are updated only by $\mathcal{L}_{\text{IL}}$.

\smallskip
\noindent\textbf{Inference.} Different from the training process, for the planning task we directly select the full planning trajectory of $T_f=8$ seconds by highest scoring $\tau^*=\argmax_C\hat{\mathbf{Y}}_1$, subjecting to the original task settings in nuPlan. The scoring results are a combination from original confidence $\hat{\mathbf{p}}_1$ and the short-term cost $C_M$~\cite{dauner2023parting}: $C=\hat{\mathbf{p}}_1+\lambda_mC_M$, where $\lambda_m=0.5$ facilitates short-term planning compliance~\cite{cheng2024pluto}. For the prediction task, a post-processing module following~\cite{lin2024eda} is leveraged in selecting $M=6$ marginal or joint trajectories of $T_f=8$ seconds among 3 agent types in WOMD.

\section{Experimental Setup Details}
\label{supp:exp-detail}
\vspace{-5pt}

In this section, we provide extra details demonstrated in~\cref{sec:exp} for the experiment setups, including detailed settings for the proposed benchmark, testing metrics, state-of-the-art baselines, and training.  

\subsection{Planning on nuPlan}
\vspace{-4pt}

\noindent\textbf{Testing metrics.} For open-loop planning tests, the open-loop score (OLS) serves as the general statistics weighting displacement metrics and miss rates. For closed-loop simulations, both metrics (CLS, CLS-NR) are weighted by a series of statistics measuring 1) driving safety, 2) planning progress, 3) driving comforts, and 4) rule obeying. The PDMScore~\cite{Contributors2024navsim, Dauner2024ARXIV} compared in~\Cref{tab:results_nuplan_inter} is basically a replica of the closed-loop score for efficient computations. It is denoted as:
\begin{equation}
    \texttt{PDM}_\texttt{Score} = \texttt{CA}\cdot\texttt{DAC}\cdot\texttt{DDC}\cdot\frac{w_1\texttt{TTC}+w_2\texttt{DC}+w_3\texttt{EP}}{\sum w_i},
\end{equation}
where the sub-metrics are abbreviations referred in~\Cref{tab:results_nuplan_inter}. All general metrics range from 0 to 1. 

\smallskip
\noindent\textbf{Test14-Inter.} We launch the \emph{Test14-Inter} benchmark in verifying the planning systems under typical corner cases containing rich social interactions, or dynamic profiles by complex map forms. This is highly motivated by the issues raised in~\cite{dauner2023parting, li2023ego}, that massive scenarios may also be completed by a simple motion model. Specifically, we adopt a mining heuristic defining corner cases by which human experts excel but the motion model (constant acceleration vehicle, CAV) fails. For efficient mining, we directly assess planning results by PDMScore and define the criteria as:
\begin{equation}
    (\texttt{PDM}_\texttt{Score}\texttt{CAV}<\gamma )\wedge(\texttt{PDM}_\texttt{Score}\texttt{Expert}\geq(1-\gamma)),
\end{equation}
where $\gamma=0.1$ denotes a scoring threshold for cases that cannot be easily solved by regular motion profiles of the planning maneuvers. As future work, we aim to explore more interactive scenarios aggravating by \algname as an enhancement.

\smallskip
\noindent\textbf{Val14.} In pursuing comprehensive comparisons with current methods, we also manifest \modelname in the \emph{Val14} set proposed in~\cite{dauner2023parting}. It is a subset of 1040 scenes from the validation set. However, since a portion of validation scenes are shared with the training set in nuPlan~\cite{caesar2021nuplan}, we argue this is less representative of testing fairness for learning-based methods. Hence, we only place it as supplementary.

\smallskip
\noindent\textbf{Baselines.} For all baselines presented in the planning task, we directly report their previous benchmark results. Additional results in the proposed benchmark (\Cref{tab:results_nuplan_inter}) and ablation studies (\Cref{tab:results_nuplan_synergy}) are re-implemented by the official releases~\cite{dauner2023parting,PlanT,karnchanachari2024towards,dauner2023parting}. Expressly, we study the state-of-the-art planning systems categorized by: 1) Rule-based: performing maneuvers by designate rules with the reactive agents~\cite{treiber2000congested} or mimicking the planning score~\cite{dauner2023parting}; 2) Hybrid: incorporating rules~\cite{dauner2023parting} or post-optimizations~\cite{huang2023gameformer} with a learning-based model; and 3) Learning-based: end-to-end planning with GNN~\cite{hallgarten2023prediction,scheel2022urban} or Transformer~\cite{PlanT,cheng2023rethinking} enabled models, as well as concurrent methods~\cite{cheng2024pluto} augmented by representation learning. For ablation studies in~\Cref{tab:results_nuplan_synergy}, \algname is trained directly by the proposed topology decoder with the original PlanT~\cite{cheng2023rethinking} pipeline. For the PDM~\cite{dauner2023parting} as a rule-based planning system, we integrate \algname by replacing the original rule-based motion model with predictions generated from \modelname.

\subsection{Prediction on WOMD}
\vspace{-4pt}

\noindent\textbf{Testing metrics.} For the prediction task, the mean AP (mAP) and Soft-mAP scores are assigned as the primary metrics in computing multi-modal predictions modeled by marginal or joint distributions~\cite{Ettinger_2021_ICCV,waymo_marginal,waymo_interaction}. Displacement metrics of minADE and minFDE provide the multi-modal prediction errors closest to ground truths without considering the prediction scores.

\smallskip
\noindent\textbf{Baselines.} We also directly provide the prediction results displayed on the official leaderboards in~\Cref{tab:results_womd_motion,tab:results_womd_interactive}. Ablation studies in~\Cref{tab:results_womd_synergy} are reproduced by the official codes~\cite{shi2022motion,lin2024eda}. The prediction performance of \modelname is compared against SOTA baselines by: 1) GNN-enabled interactive graph~\cite{mo2022multi,jia2023hdgt}; 2) conditional or game-theoretic behavioral interactions~\cite{sun2022m2i,huang2023gameformer}; 3) DETR-based Transformer attentions~\cite{shi2022motion,shi2024mtr++,lin2024eda,sun2024controlmtr}; and 4) auto-regressive modeling~\cite{jia2024amp}. 

\begin{table*}[t!]
\caption{\textbf{Detailed nuPlan closed-loop simulation results in Val14 benchmark.} \modelname highlights leading results among SOTA methods in safety and compliance, outperforms learning-based agents.}
\centering
\label{tab:results_nuplan_val14}
\vspace{-4pt}
\scalebox{0.83}{
\begin{tabular}{l|l|cccccc|cc}
\toprule
\multirow{2}{*}{Type} & \multirow{2}{*}{Method} & \multicolumn{8}{c}{\emph{Val14}}  \\
&       & CA $\uparrow$    & TTC $\uparrow$    & DDC $\uparrow$    & DC $\uparrow$     & EP $\uparrow$   & Speed $\uparrow$    & \textbf{CLS-NR} $\uparrow$ & \textbf{CLS-R} $\uparrow$    \\
\midrule
\textcolor{dgray}{\emph{Expert}}
                           &\textcolor{dgray}{\emph{Log Replay}}  & \textcolor{dgray}{0.987}   & \textcolor{dgray}{0.944}  & \textcolor{dgray}{0.981}  & \textcolor{dgray}{0.993} & \textcolor{dgray}{0.989} & \textcolor{dgray}{0.965}  & \textcolor{dgray}{0.937}&\textcolor{dgray}{0.812} \\\midrule
\multirow{2}{*}{Rule}      & IDM~\cite{treiber2000congested}  & 0.909 &0.834 &0.941 &0.944 &0.862 &0.973 &0.793 &0.793 \\
      & PDM-Closed~\cite{dauner2023parting} & 0.981  & 0.933 & 0.998  & 0.955  & 0.921 & 0.998 & 0.932 &0.930 \\
\midrule
\multirow{1}{*}{Hybrid}    & GameFormer~\cite{huang2023gameformer}  & 0.943 &0.867 &0.948 &0.933 &0.890 &0.987 &0.829 &0.838 \\\midrule
\multirow{7}{*}{Learning}  & UrbanDriver~\cite{scheel2022urban}  & 0.856 &0.803 &0.908 &\textbf{1.000} &0.808 &0.915 &0.677 &0.648 \\
                           & PDM-Open~\cite{dauner2023parting}  & 0.745 &0.691 &0.879 &\underline{0.995} &0.698 &0.977 &0.502 & 0.548  \\ 
                           & PlanCNN~\cite{PlanT}  & 0.869 & 0.814 &0.850 &0.814 &0.806 &0.980 &0.669 &0.646 \\
                           & GC-PGP~\cite{hallgarten2023prediction} &0.858 &0.801 &0.897 &0.900 &0.603 &\textbf{0.993}&0.611 & 0.549  \\
                           & PlanTF~\cite{cheng2023rethinking}  & 0.941 &0.907 &0.968 &0.937 &\textbf{0.898} &0.977 &0.853&0.771 \\
                           & PLUTO~\cite{cheng2024pluto}  & \underline{0.961} &\textbf{0.933} &\underline{0.985} &0.964 &\underline{0.895} &\underline{0.981} &\textbf{0.890}&\underline{0.800} \\
                           & \textbf{\modelname (Ours)}  & \textbf{0.966} & \underline{0.916} & \textbf{0.995}  &  0.932 & 0.866 &0.971 &\underline{0.883} & \textbf{0.837} \\
\bottomrule
\end{tabular}
}
\end{table*}

\begin{table*}[t!]
\caption{\textbf{Detailed nuPlan closed-loop planning results (PDMScore) on Test14-Random benchmark.} }
\centering
\label{tab:results_nuplan_random_supp}
\vspace{-4pt}
\scalebox{0.75}{
\begin{tabular}{l|l|cccccc>{\columncolor[gray]{0.9}}c}
\toprule
\multirow{2}{*}{Type} & \multirow{2}{*}{Method} & \multicolumn{7}{c}{\emph{Test14 Random}}  \\
&       & Col. Avoid $\uparrow$    & Drivable $\uparrow$    & Direction $\uparrow$    & Progress $\uparrow$     & TTC $\uparrow$   & Comfort $\uparrow$    & \textbf{PDMScore} $\uparrow$     \\
\midrule
\textcolor{dgray}{\emph{Expert}}
                           &\textcolor{dgray}{\emph{Log Replay}}  & \textcolor{dgray}{0.996}   & \textcolor{dgray}{0.962}  & \textcolor{dgray}{0.996}  & \textcolor{dgray}{0.664} & \textcolor{dgray}{0.985} & \textcolor{dgray}{1.000}  & \textcolor{dgray}{0.832} \\\midrule
\multirow{1}{*}{Rule}      & PDM-Closed~\cite{dauner2023parting} & 0.934  & 0.984 & 0.996  & 0.867  & 0.911 & 0.996 & 0.888 \\
\midrule
\multirow{5}{*}{Learning}  & Constant Acc.  & 0.846   & 0.907  & 0.915  & 0.436 & 0.804 & \underline{1.000}  & 0.592 \\
                           & UrbanDriver~\cite{scheel2022urban}  & 0.965 & \underline{0.961} & \underline{0.986}  & 0.611 & \underline{0.957} & 1.000  & \underline{0.788} \\
                           & PlanCNN~\cite{PlanT}  & 0.935 & 0.938  & 0.971  & 0.591  & 0.888 & 0.989  & 0.736 \\
                           & PlanTF~\cite{cheng2023rethinking}  & \underline{0.966}  & 0.948  & 0.625 & \underline{0.626} & 0.918 & 0.992 & 0.768 \\
                           & \textbf{\modelname (Ours)}  & \textbf{0.989} & \textbf{0.977} & \textbf{0.989}  &  \textbf{0.673} & \textbf{0.969} &\textbf{1.000} & \textbf{0.833} \\
\bottomrule
\end{tabular}
}
\end{table*}

\begin{table*}[t!]
\caption{\textbf{Detailed nuPlan closed-loop planning results (PDMScore) on Test14-Hard benchmark.} }
\centering
\label{tab:results_nuplan_hard_supp}
\vspace{-4pt}
\scalebox{0.75}{
\begin{tabular}{l|l|cccccc>{\columncolor[gray]{0.9}}c}
\toprule
\multirow{2}{*}{Type} & \multirow{2}{*}{Method} & \multicolumn{7}{c}{\emph{Test14 Hard}}  \\
&       & Col. Avoid $\uparrow$    & Drivable $\uparrow$    & Direction $\uparrow$    & Progress $\uparrow$     & TTC $\uparrow$   & Comfort $\uparrow$    & \textbf{PDMScore} $\uparrow$     \\
\midrule
\textcolor{dgray}{\emph{Expert}}
                           &\textcolor{dgray}{\emph{Log Replay}}  & \textcolor{dgray}{0.985}   & \textcolor{dgray}{0.945}  & \textcolor{dgray}{0.970}  & \textcolor{dgray}{0.658} & \textcolor{dgray}{0.955} & \textcolor{dgray}{1.000}  & \textcolor{dgray}{0.786} \\\midrule
\multirow{1}{*}{Rule}      & PDM-Closed~\cite{dauner2023parting} & 0.933  & 0.952 & 0.976  & 0.779  & 0.852 & 0.981 & 0.811 \\
\midrule
\multirow{5}{*}{Learning}  & Constant Acc.  & 0.845   & 0.871  & 0.861  & 0.415 & 0.800 & 1.000 & 0.552 \\
                           & UrbanDriver~\cite{scheel2022urban}  & 0.946 & 0.944 & \underline{0.992}  & 0.581 & 0.903 & \textbf{1.000}  & 0.731 \\
                           & PlanCNN~\cite{PlanT}  & 0.909 & 0.908  & 0.937  & 0.555  & 0.860  & 0.992  & 0.675 \\
                           & PlanTF~\cite{cheng2023rethinking}  & \textbf{0.984}  & \textbf{0.961}  & \textbf{0.996} & \underline{0.649} & \textbf{0.961} & \underline{0.996} & \underline{0.813} \\
                           & \textbf{\modelname (Ours)}  & \underline{0.968} & \underline{0.945} & 0.972  &  \textbf{0.747} & \underline{0.908} &\underline{0.996} & \textbf{0.813} \\
\bottomrule
\end{tabular}
}
\vspace{-8pt}
\end{table*}

\subsection{Training Setup}
\vspace{-4pt}

\modelname for both prediction and planning tasks are trained in end-to-end manners by AdamW optimizer with 4 NVIDIA A100 GPUs.  The learning rate is configured as $1e^{-4}$ scheduled with the multi-step reduction strategy. The planning model is trained by 25 epochs with a batch size of 128, while the prediction task is trained with 30 epochs with a batch of 256. 

\section{Additional Quantitative Results}
\label{supp:quan_results}
\vspace{-5pt}

\subsection{Planning}
\vspace{-4pt}

\noindent\textbf{Additional planning results in Val14.} We evaluate the closed-loop simulation performance under \emph{Val14} in~\Cref{tab:results_nuplan_val14}, \modelname hovers strong planning results  and is comparable ($+4.6\%$ CLS) to concurrent learning-based methods~\cite{cheng2024pluto} leveraging extra contrasting learning for training augmentations. \modelname is also featured by leading driving safety ($+2.7\%$ CA, $+1.0\%$ TTC) and compliance ($+2.8\%$ DDC) compared with other strong models~\cite{cheng2023rethinking,huang2023gameformer}. However, due to the data leakage of \emph{Val14} with training set by a part of shared scenarios, we only provide the results as a reference.

\smallskip
\noindent\textbf{Additional planning effects in Test14.} To delve into the planning results of \modelname, we present statistics measuring by another detailed metric, PDMScore, for both of the \emph{Test14} benchmarks in \Cref{tab:results_nuplan}. Exhibited in~\Cref{tab:results_nuplan_hard_supp,tab:results_nuplan_random_supp}, \modelname delivers strong maneuver safety and compliance, marking solid PDMScore from both benchmarks. Compared with learning-based methods, \modelname excels in 
 closed-loop driving progress ($+15.1\%,+7.5\%$ EP), safety ($+5.6\%$ TTC, $+2.9\%$ CA), and the general score ($+8.5\%$ PDMScore). For rule-based systems, the leading performance is empirically by virtue of a constant driving progress. This may refer to an unresolved \emph{copy-cat} problem~\cite{cheng2023rethinking} for imitative planners. It requires further integration and fallback with rule-based methods for on-board AD system design in practice.

\subsection{Prediction}
\vspace{-4pt}

\noindent\textbf{Per-category marginal prediction.} In~\Cref{tab:results_womd_marginal_supp}, We mainfest the prediction performance of \modelname under each prediction category. Compared against the concurrent SOTA motion predictors~\cite{lin2024eda}, \modelname demonstrates superior mAP-based metrics among all types for compliant predictions. Specifically, overall improvements in Cyclist denote refined interactive patterns captured by \algname, as the cyclist predictions are the most uncertain task with less reliance on map information.

 \begin{table*}[t]
\caption{\textbf{Marginal predictions on WOMD Motion Leaderboard~\cite{waymo_marginal}.} {\setlength{\fboxsep}{0pt}\colorbox{gray!40}{Primary metric}}.}
\centering
\label{tab:results_womd_marginal_supp}
\vspace{-4pt}
\scalebox{0.85}{
\begin{tabular}{l|l|ccc|c>{\columncolor[gray]{0.9}}c}
\toprule
Category & Method & minADE $\downarrow$ & minFDE $\downarrow$ & Miss Rate $\downarrow$ & mAP $\uparrow$ & \textbf{Soft mAP} $\uparrow$  \\ \midrule
\multirow{3}{*}{Vehicle} & MTR~\cite{shi2022motion} & 0.7642& 1.5257  & 0.1514 & 0.4494 & 0.4590 \\
                      & EDA~\cite{lin2024eda}    & \textbf{0.6808} & \underline{1.3921} & \textbf{0.1164} & \underline{0.4833} & \underline{0.4972} \\
                      & \textbf{\modelname (Ours)}  & \underline{0.6814} & \textbf{1.3888} & \underline{0.1172} & \textbf{0.4860} & \textbf{0.4995} \\\midrule
\multirow{3}{*}{Pedestrian} & MTR~\cite{shi2022motion}     & 0.3486 & 0.7270 & 0.0753 & 0.4331 & 0.4409 \\ 
                      & EDA~\cite{lin2024eda}        & \textbf{0.3426} & \textbf{0.7080} & \underline{0.0670}  & \underline{0.4680} & \underline{0.4778}  \\  
                      & \textbf{\modelname (Ours)}  & \underline{0.3451} & \underline{0.7142} & \textbf{0.0668} & \textbf{0.4777} & \textbf{0.4875} \\\midrule
\multirow{3}{*}{Cyclist} & MTR~\cite{shi2022motion}     & 0.7022 & \underline{1.4093} & 0.1786 & 0.3561 & 0.3650 \\ 
                      & EDA~\cite{lin2024eda}         & \underline{0.6920} & 1.4106 & \textbf{0.1673}  & \underline{0.3947} & \underline{0.4037}  \\  
                      & \textbf{\modelname (Ours)}  & \textbf{0.6905} & \textbf{1.3975} & \underline{0.1688} & \textbf{0.4060} & \textbf{0.4163} \\
                      
\bottomrule

\end{tabular}
}
\vspace{-0.1cm}
\end{table*}

 \begin{table*}[t]
\caption{\textbf{Joint predictions on WOMD Interaction Leaderboard~\cite{waymo_interaction}.}  {\setlength{\fboxsep}{0pt}\colorbox{gray!40}{Primary metric}}.}
\centering
\label{tab:results_womd_joint_supp}
\vspace{-4pt}
\scalebox{0.85}{
\begin{tabular}{l|l|ccc|>{\columncolor[gray]{0.9}}cc}
\toprule
Category & Method & minADE $\downarrow$ & minFDE $\downarrow$ & Miss Rate $\downarrow$ & \textbf{mAP} $\uparrow$ & Soft mAP $\uparrow$  \\ \midrule
\multirow{3}{*}{Vehicle} & GameFormer~\cite{huang2023gameformer} & 1.0499& 2.4044  & 0.4321 & 0.2469 & 0.2564 \\
                      & AMP~\cite{jia2024amp}    & \textbf{0.9862} & \textbf{2.2286} & \textbf{0.3726} & \underline{0.3104} & \underline{0.3196} \\
                      & \textbf{\modelname (Ours)}  & \underline{1.0216} & \underline{2.3970} & \underline{0.3738} & \textbf{0.3374} & \textbf{0.3308} \\\midrule
\multirow{3}{*}{Pedestrian} &GameFormer~\cite{huang2023gameformer}     & 0.7978 & \underline{1.8195} & 0.4713 & 0.1962 & 0.2014 \\ 
                      & AMP~\cite{jia2024amp}        & \textbf{0.6823} & \textbf{1.5244} & \textbf{0.3716}  & \textbf{0.2359} & \textbf{0.2423}  \\  
                      & \textbf{\modelname (Ours)}  & \underline{0.7862} & 1.8412 & \underline{0.4074} & \underline{0.2212} & \underline{0.2267} \\\midrule
\multirow{3}{*}{Cyclist} & GameFormer~\cite{huang2023gameformer}     & \underline{1.0686} & \underline{2.4199} & 0.5765 & 0.1367 & 0.1338 \\ 
                      & AMP~\cite{jia2024amp}         & \textbf{1.0533} & \textbf{2.3715} & \textbf{0.5194}  & \underline{0.1420} & \underline{0.1477}  \\  
                      & \textbf{\modelname (Ours)}  & 1.1155 & 2.5850 & \underline{0.5253} & \textbf{0.1717} & \textbf{0.1756} \\
                      
\bottomrule
\end{tabular}
}
\vspace{-6pt}
\end{table*}

\smallskip
\noindent\textbf{Per-category joint prediction.} We further instantiate the per-category joint prediction of \modelname with SOTA methods in~\Cref{tab:results_womd_joint_supp}. Compared with concurrent methods~\cite{jia2024amp} featuring auto-regressive decoding, \modelname achieves robust displacement metrics, while outperforming in prediction compliance of mAP metrics ($+8.7\%,+20.9\%$ mAP) due to advanced joint modality scoring stabilized by edge topology in \modelname. Moreover, the coordinated joint behaviors reasoned by \modelname largely mitigate the unstable patterns against game-theoretic method~\cite{huang2023gameformer} ($-15.6\%, -15.7\%, -9.7\%$ Miss Rate) under similar model architecture.

\section{Additional Ablation Studies}
\vspace{-5pt}

\noindent\textbf{Scaling effects of model and decoding agents.} The scalability challenges begin with the scaling of our \modelname models to accommodate varying scene agents and map. Experimentally, we configure \modelname with different model scales to evaluate whether our approach maintains its effectiveness. 

In \Cref{tab:ablation_model_scale_supp}, \modelname is evaluated by three model scales varying in decoding modalities and dimensions. The results demonstrate that BeTopNet consistently improves prediction accuracy, with an increase from 0.391 to 0.442 ($+13.4\%$ mAP) and a decrease in the Miss Rate ($-11.9\%$). This showcases its enhanced robustness in handling multi-agent settings by enlarging model scales.

In \Cref{tab:ablation_decode_agents_supp}, \modelname reports comparable computational costs compared to~\cite{shi2022motion}, while with better prediction accuracy shown in \Cref{tab:results_womd_marginal_supp}. The similar latency is due to the topo-guided attention, which reduces the KV features in agent aggregation during decoding. While \algname introduces extra computations for reasoning, it requires more GPU memory for cached topology tensors.

\begin{table}[t]
\begin{minipage}[t]{0.53\textwidth}
\centering
\caption{\textbf{Effects of varied model scale.} \modelname shows scalability with the number of decoding modalities and feature dimensions.}
\centering
\label{tab:ablation_model_scale_supp}
\vspace{2pt}
\scalebox{0.72}{
\begin{tabular}{l|>{\columncolor[gray]{0.9}}cc|cc}
\toprule
 Scale          &  \textbf{mAP} $\uparrow$   & Miss Rate $\downarrow$      & Latency (ms) & \# Params. (M)  \\
 \midrule
    Small  & 0.391          & 0.131          & 45                & 28.91          \\
    Medium & 0.437          & 0.119          & 65                & 28.91          \\
    Base   & \textbf{0.442} & \textbf{0.117} & 70                & 45.38  \\      
\bottomrule
\end{tabular}
}
\end{minipage}\hspace{.5cm}%
\begin{minipage}[t]{0.46\textwidth}
\centering
\caption{\textbf{Effects of varied decoding agents.} Computational costs of \cite{shi2022motion} are reported in the \textcolor{gray}{parenthesis} after ours.}
\centering
\label{tab:ablation_decode_agents_supp}
\vspace{2pt}
\scalebox{0.72}{
\begin{tabular}{c|cc}
\toprule
 \# Decoding Agents  & Latency (ms) &  GPU Memory (G)  \\
 \midrule
    8  & 89 \textcolor{gray}{(84)}  & 6.5 \textcolor{gray}{(5.2)}  \\
    16 & 120 \textcolor{gray}{(123)} & 10.8 \textcolor{gray}{(7.1)} \\
    32 & 166 \textcolor{gray}{(193)} & 19.2 \textcolor{gray}{(15.6)} \\      
\bottomrule
\end{tabular}
}
\end{minipage}
\vspace{-10pt}
\end{table}

\smallskip
\noindent\textbf{Temporal granularity in \algname.} \Cref{tab:ablation_multi_step_supp} explores the effect of varied temporal granularity in \algname, with minimal adjustments to \modelname. In our study, future interactions are split into multi-step BeTop labels. Topology reasoning task is then deployed through expanded MLP Topo Head for output steps. Compared to the baseline 1-step reasoning, multi-step BeTop reasoning slightly improves performance (\textit{e.g.}, 2-steps, +0.2 mAP), with a corresponding increase in computational costs for additional steps. This highlights the potential of multi-step reasoning to enhance \modelname in interactive scenarios, while refining temporal granularity for more accurate and efficient interactions remains an open question. We believe how to effectively leverage multi-step \modelname represents an interesting area for future exploration. 

\smallskip
\noindent\textbf{Synergy with additional model foundation.} To understand the generalization under different model foundations, we conduct additional ablations integrating \algname with reproduced Wayformer~\cite{nayakanti2023wayformer} in~\cite{feng2024unitraj}. As reported in \Cref{tab:ablation_wayformer_supp},  incorporating \algname as supervision improves vanilla Wayformer with a $-6.2\%$ Miss Rate and $+3.2\%$ mAP. Furthermore, integrating \modelname significantly boosts performance, achieving a $+18.6\%$ mAP and $-7.2\%$ Miss Rate. This enhancement is due to synergistic decoder design, which uses iterative BeTop reasoning and Topo-guided attention to refine trajectories by selectively aggregating interactive features.

\begin{table}[t]
\centering
\caption{\textbf{Effects of varied temporal granularity in \algname.} Future interactions are split into various intervals for multi-step \algname labels. A fine-grained topology reasoning for the whole prediction horizon results in a slightly improved performance and increased computational costs simultaneously.}
\label{tab:ablation_multi_step_supp}
\vspace{2pt}
\scalebox{0.84}{
    \begin{tabular}{l|ccc>{\columncolor[gray]{0.9}}c|ccc}
    \toprule
    Interval  &  minADE $\downarrow$  &  minFDE $\downarrow$ & Miss Rate $\downarrow$  &  \textbf{mAP} $\uparrow$ & \begin{tabular}{c} Inference \\ Latency (ms)
    \end{tabular} & \begin{tabular}{c} Training \\ Latency (ms) \end{tabular} & \begin{tabular}{c} \# Params. \\ (M) \end{tabular}  \\
    \midrule
    1(Base)    & 0.637          & 1.328          & 0.144   & 0.392         & 70.0   & 101.6         & 45.380  \\
\textbf{2}    & \textbf{0.633} & \textbf{1.325} & 0.145   & \textbf{0.394}  & 75.5    & 110.6    & 45.382   \\
4   & 0.634 & 1.326 & \textbf{0.142} & 0.391 & 80.0  & 133.4 & 45.386 \\
8  & 0.641   & 1.347   & 0.147  & 0.389  & 90.0   & 255.0 & 45.393    \\ \bottomrule
    \end{tabular}
}
\vspace{-6pt}
\end{table}

\begin{table}[t!]
\centering
\caption{\textbf{Effects of varied model foundation by Wayformer~\cite{nayakanti2023wayformer}.} Synergistic decoder design by \modelname demonstrate solid multi-agent interaction understanding compared with vanilla design.}
\label{tab:ablation_wayformer_supp}
\vspace{2pt}
\scalebox{0.84}{
    \begin{tabular}{l|ccc>{\columncolor[gray]{0.9}}c}
    \toprule
    Method  &  minADE $\downarrow$  &  minFDE $\downarrow$ & Miss Rate $\downarrow$ &  \textbf{mAP} $\uparrow$ \\
    \midrule
    Wayformer & 0.661 & 1.417 & 0.199 & 0.281  \\
    Wayformer+\algname & 0.637 & 1.364 & 0.178 & 0.290  \\
    Wayformer+\modelname & \textbf{0.604} & \textbf{1.261} & \textbf{0.166} & \textbf{0.344}  \\
    \bottomrule
    \end{tabular}
}
\vspace{-4pt}
\end{table}

\section{Additional Qualitative Results}
\vspace{-5pt}

\noindent\textbf{Additional planning results.} We provide the qualitative closed-loop simulations for all of the benchmarks in \emph{Test14}, as shown in~\cref{fig:supp_nuplan_hard_random,fig:supp_nuplan_inter}.

\smallskip
\noindent\textbf{Additional prediction results.} We provide the qualitative prediction results for \algname with reasoned edge topology under both marginal (\cref{fig:supp_womd_marginal}) and joint (\cref{fig:supp_womd_joint}) challenge settings.

\begin{figure}[t]
    \centering
    \includegraphics[width=\textwidth]{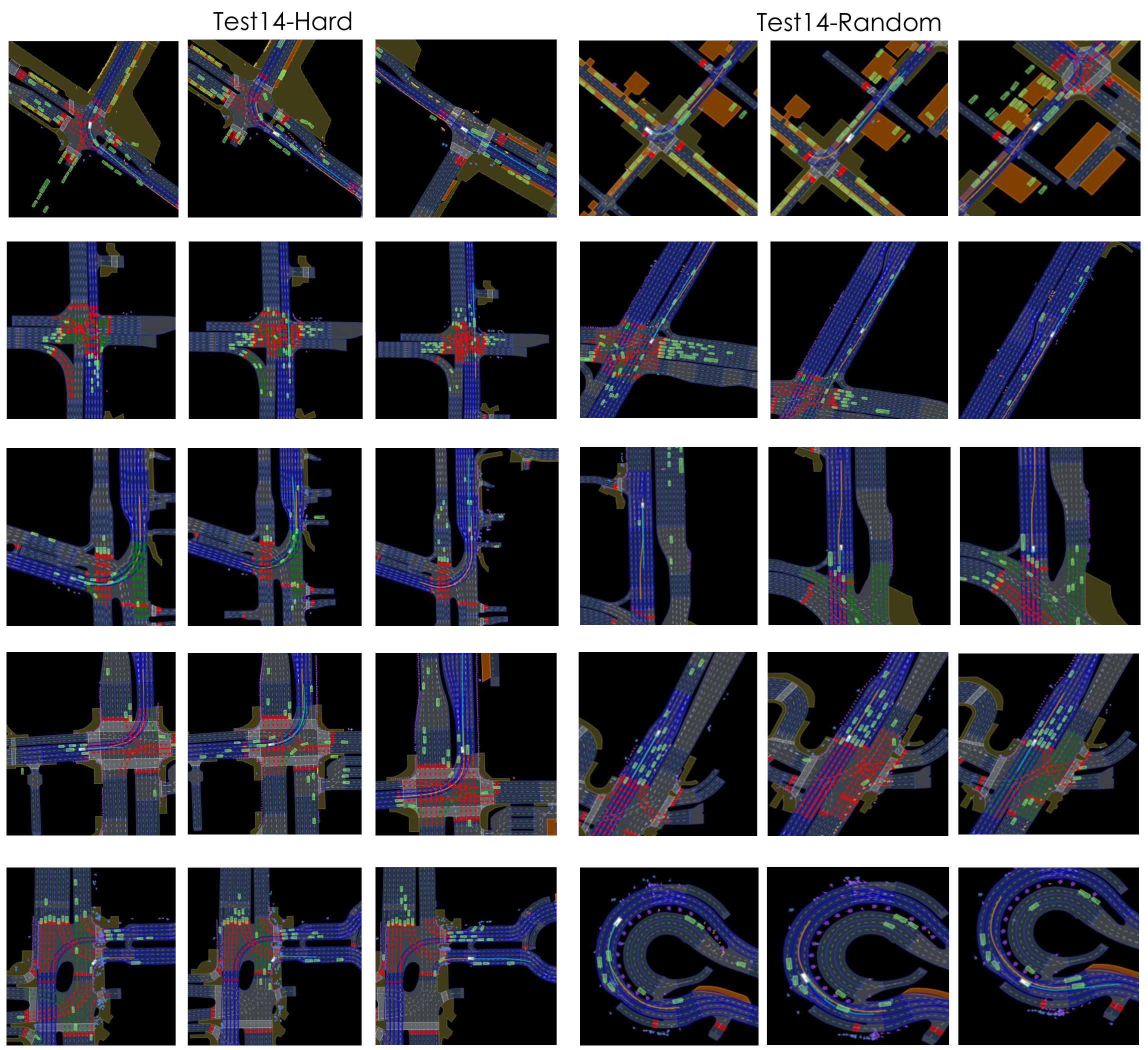}
    \caption{\textbf{Qualitative results of \modelname in nuPlan planning under Test14 simulations.} Each row of the figures render closed-loop simulations at 1s, 8s, and 15s temporal frames. As illustrated, \modelname performs consistent planning under challenging driving scenarios of diverse categories.}
    \label{fig:supp_nuplan_hard_random}
\end{figure}

\begin{figure}[t]
    \centering
    \includegraphics[width=\textwidth]{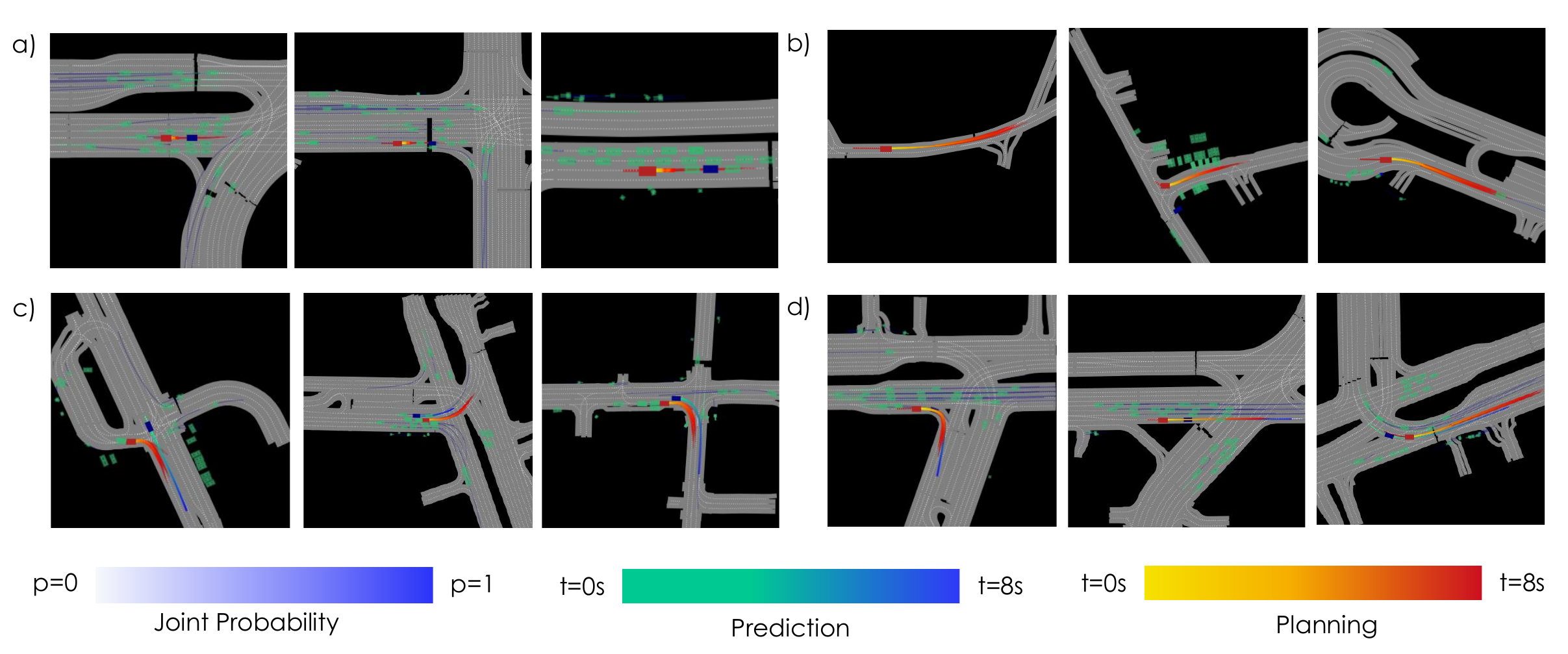}
    \caption{\textbf{Qualitative results of \modelname in nuPlan planning under Test14-Inter.} \modelname performs compliant planning under: (a) yielding to front agents; (b) cruising on various road structure; (c-d) interactive behaviors among two or more agents with dense traffic.}
    \label{fig:supp_nuplan_inter}
\end{figure}

\begin{figure}[t]
    \centering
    \includegraphics[width=\textwidth]{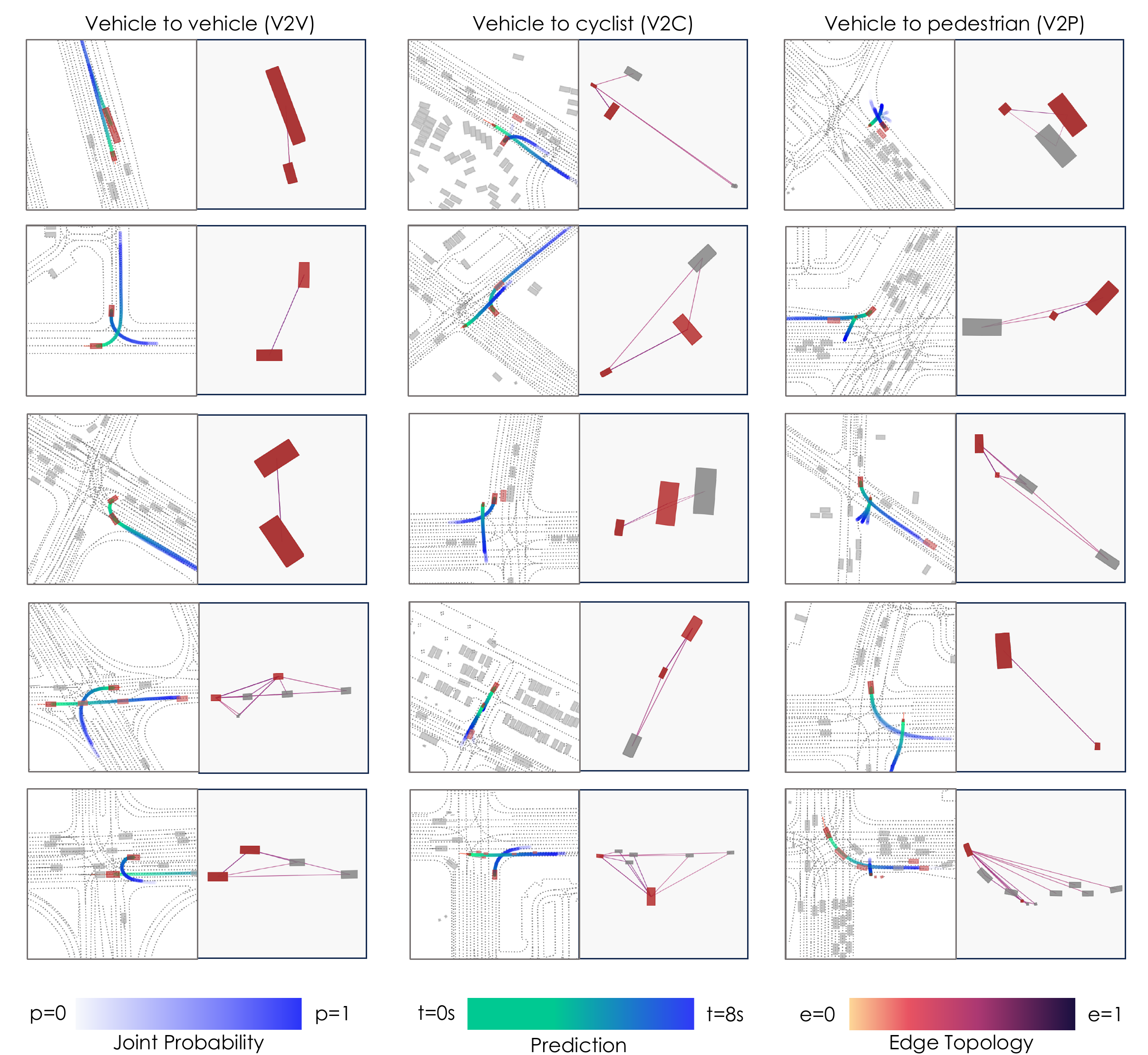}
    \caption{\textbf{Qualitative results of \modelname in WOMD joint prediction.} Joint predictions among heterogeneous agents are categorized by each column (V2V, V2C, and V2P) with corresponding TopK reasoned topology. As depicted, \modelname can accurately capture the future interactive behaviors via edge topology reasoning compared with the human annotations of interactive agents (rendered in red). Moreover, \modelname may source on potential interactions as rendered in grey. }
    \label{fig:supp_womd_joint}
\end{figure}

\begin{figure}[t]
    \centering
    \includegraphics[width=\textwidth]{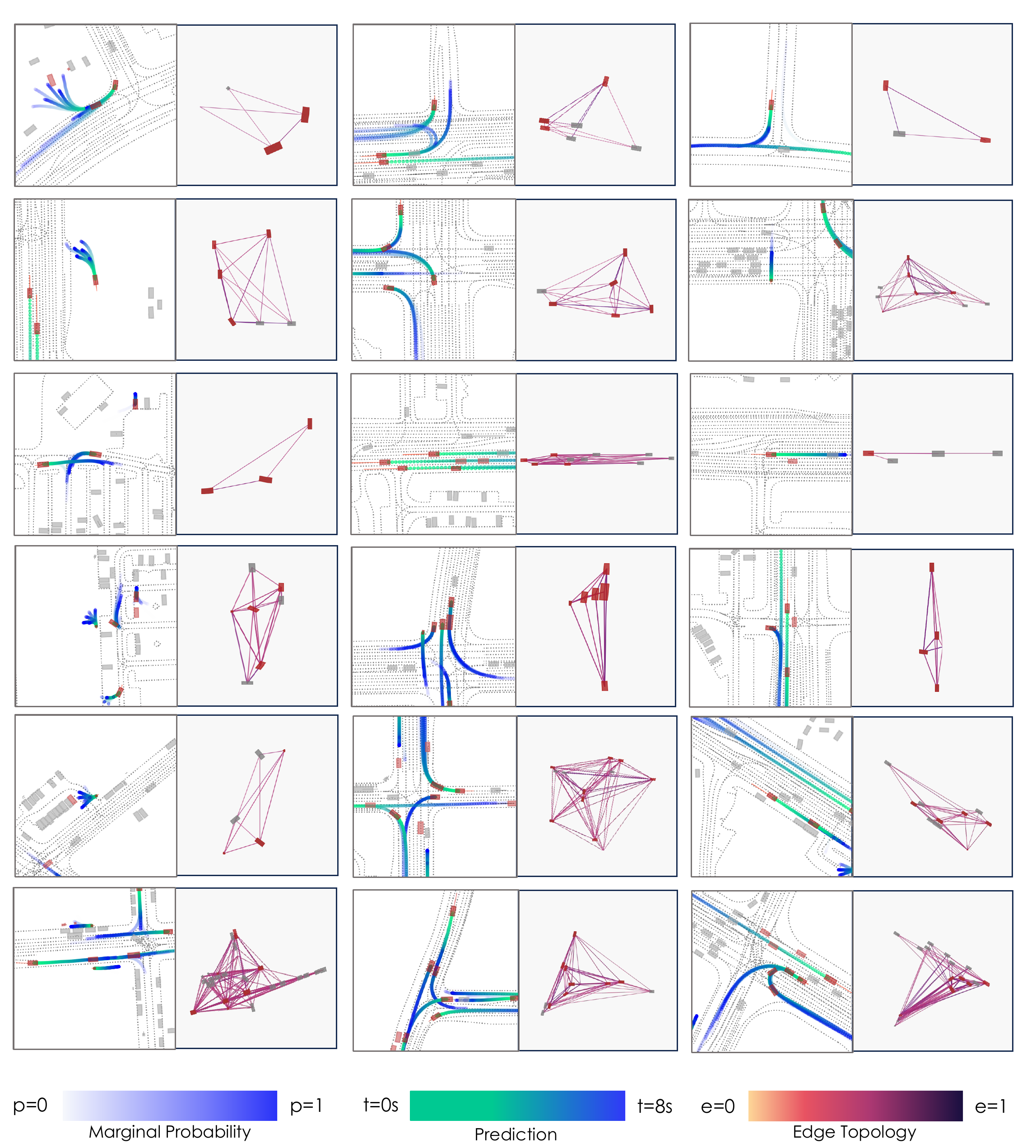}
    \caption{\textbf{Qualitative results of \modelname in WOMD marginal prediction.} \modelname performs compliant and accurate marginal predictions on multiple agents, reasoning diverse edge topology which stabilizes the behavioral patterns for future interactions.}
    \label{fig:supp_womd_marginal}
\end{figure}

\section{License of Assets}
\label{supp:license}
\vspace{-5pt}

Data for nuPlan~\cite{caesar2021nuplan} and WOMD~\cite{Ettinger_2021_ICCV} are complied with CC-BY-NC 4.0 licence and Apache License 2.0; The code for re-implementations are under Apache License 2.0 for PDM~\cite{dauner2023parting} and MTR~\cite{shi2022motion}, and MIT License for EDA~\cite{lin2024eda} and PlanTF~\cite{cheng2023rethinking}, respectively. 
The source code and our trained models will be publicly available under the Apache License 2.0.

\end{document}